\documentclass{article}

\PassOptionsToPackage{numbers}{natbib}
\usepackage[final,main]{neurips_2026}

\usepackage{tikz}
\usepackage{pgfplots}
\pgfplotsset{compat=1.18}

\usepackage{hyperref}
\hypersetup{
    colorlinks=true,
    linkcolor=purple,
    citecolor=purple,
    urlcolor=blue,
}
\usepackage{url}
\usepackage{booktabs}
\usepackage{amsfonts}
\usepackage{nicefrac}
\usepackage{microtype}
\usepackage{xcolor}
\usepackage{algorithm}
\usepackage{algpseudocode}
\usepackage{graphicx}
\usepackage{caption}
\usepackage{subcaption}
\usepackage{multirow}
\usepackage{amsmath}
\usepackage{amssymb}
\usepackage{amsthm}
\usepackage{enumitem}
\usepackage{listings}
\usepackage{etoc}
\usepackage{tcolorbox}
\tcbuselibrary{breakable}
\usepackage{tabularx}
\usepackage{colortbl}
\usepackage{pifont}
\usepackage{marvosym}
\usepackage{makecell}
\usepackage{float}
\definecolor{CadetBlue}{RGB}{95,158,160}

\usepackage{xspace}

\newcommand{\name}{SpecForge\xspace}

\newtheorem*{proposition*}{Proposition}
\newtheorem*{theorem*}{Theorem}

\newtheorem{definition}{Definition}
\newtheorem{proposition}{Proposition}
\newtheorem{theorem}{Theorem}

\newtheorem{assumption}{Assumption}
\newtheorem{remark}{Remark}

\title{\name:\ Recursive Self-Improvement for Coding Agents via Behavioral Specifications}

\author{%
  Yutong Cheng$^{1}$\thanks{Work done during the internship at NEC Laboratories America. \textsuperscript{\Letter}Corresponding author.} \quad Haifeng Chen$^{2}$ \quad Peng Gao$^{1}$ \quad Wei Cheng$^{2}$\textsuperscript{\Letter} \\[2pt]
  $^{1}$Virginia Tech \qquad $^{2}$NEC Laboratories America \\[2pt]
  \texttt{\{yutongcheng,penggao\}@vt.edu} \quad \texttt{\{haifeng,weicheng\}@nec-labs.com}
}

\begin{document}
\maketitle

\addtocontents{toc}{\protect\setcounter{tocdepth}{-1}}

\begin{abstract}
Coding agents based on frozen language models can improve their
performance by refining the external context they rely on, without
updating model weights. In this paper, we study \emph{recursive
self-improvement (RSI)} through \emph{behavioral specifications}:
natural-language code documentation that describes the behavior of
program entities and guides agents in generating correct code.
We formulate a self-improvement loop in which an agent generates
code from its current specifications, evaluates the generated code
using execution-verified tests, and refines the specifications
based on observed failures. The improved specifications then guide
subsequent code generation, allowing the agent to progressively
improve its understanding of the codebase.
However, recursive refinement is challenging because program
entities are interdependent. Improving the specification of one
entity can inadvertently degrade the correctness of others,
a phenomenon we call \emph{output coupling}. This creates a
\emph{whack-a-mole} effect in which local improvements cause
regressions elsewhere, preventing consistent progress.
We introduce \textsc{\name}, a multi-frontier tree search
framework that addresses this challenge through three mechanisms:
(i) Pareto-guided state selection preserves complementary
specification states and prevents premature commitment;
(ii) dependency-constrained entity selection prioritizes
callee-before-caller refinement; and (iii) error-diversified
expansion explores multiple refinements using different subsets
of test failures.
On \textsc{DevEval+}, \textsc{\name} achieves the best performance
across five backbone models, including a 94.9\% solve rate with
Claude-4.5-Sonnet and a 5.7-percentage-point average improvement
over the strongest baseline.
The improved specifications also make the agent better at tasks outside the loop: they transfer to cross-language translation and to new-feature implementation, where they outperform the benchmark-provided requirement documents.
The code and data are available at \url{https://github.com/ccsnow127/SpecForge}.
\end{abstract}

\section{Introduction}
\label{sec:intro}

Large language models are increasingly used as autonomous coding
agents. Systems such as Cursor~\citep{cursor},
Codex~\citep{codex}, and Claude Code~\citep{Claudecode}
can generate code, implement new features, and modify existing
software with limited human intervention. Their effectiveness,
however, depends heavily on how well they understand the behavior
of the codebase they work with. This understanding is essential
for tasks such as repository-level code generation
~\citep{zhang-etal-2023-repocoder,deveval},
legacy system migration~\citep{seacord2003modernizing},
cross-language reimplementation
~\citep{roziere2020unsupervised,pan2024lost}, and new-feature
implementation~\citep{swe-dev}.
One way to provide such understanding is through
\emph{behavioral specifications}: natural-language code
documentation that describes what individual program entities
(e.g., functions, classes, and methods) do, including their
inputs, outputs, and important behavioral constraints.
Unlike traditional documentation intended primarily for
human readers, these specifications serve as actionable
knowledge that coding agents can use to generate and modify code.
Importantly, because specifications are expressed in natural
language, agents can also revise them based on feedback,
creating an opportunity for self-improvement without changing
the underlying language model.

This motivates \emph{specification-mediated recursive
self-improvement (RSI)}, in which a coding agent iteratively
improves its own behavioral specifications using feedback
from code generation and execution, without modifying
the underlying model weights.
Starting from initial specifications, the agent generates
code conditioned on these specifications alone and
evaluates the generated implementations through unit tests.
Test failures reveal missing or ambiguous behavioral
information, which the agent uses to refine the
specifications. The refined specifications then guide
the next round of code generation, forming a recursive
self-improvement loop.
In our setting, the tests are automatically generated
from the reference implementation and validated through
execution, retaining only tests that pass.
This provides reliable, execution-grounded feedback
without requiring human annotations.
Importantly, the objective is not to improve the
reconstructed code itself, but to produce more accurate
behavioral specifications.
Their quality is measured not by readability or similarity
to human-written documentation~\citep{repoagent,docagent},
but by whether an agent can reconstruct behaviorally
correct code from them.
The resulting specifications capture reusable knowledge
of the existing codebase, which can support downstream
tasks such as cross-language translation and new-feature
implementation, even when no reference implementation
of the desired output is available.

Despite its promise, specification-based RSI faces a
fundamental challenge: \emph{program entities are not
independent}. Existing language-space optimization methods,
such as MIPROv2~\citep{miprov2}, GEPA~\citep{gepa}, and
ACE~\citep{ace}, demonstrate that language models can improve
their own context through iterative refinement and feedback
~\citep{yang2024llmoptimizers,feng2025naturallanguagerl}.
However, directly applying this approach to behavioral
specifications is difficult because changes to one entity
can affect the behavior of others. We refer to this
phenomenon as \emph{output coupling}.
It arises in two ways. First, \emph{dependency coupling}
occurs when the correctness of a caller depends on the
behavior of its callees. Refining a caller before its
callees are correctly specified can therefore produce
misleading feedback. Second, \emph{generation coupling}
occurs because a coding model may condition on the entire
specification bundle, so modifying one specification can
change how other entities are generated, even without
direct dependencies. Consequently, an update that fixes
one entity may break another, creating a
\emph{whack-a-mole} effect: improvements are repeatedly
undone by subsequent refinements, and the optimization
process may oscillate rather than converge.
For recursive self-improvement to be effective, it must
therefore address three requirements: preserving previous
progress, respecting dependencies between entities, and
handling noisy or incomplete feedback during refinement.

To address these challenges, we propose \textsc{\name},
a \emph{multi-frontier tree search} framework for
recursive self-improvement through behavioral specifications.
Instead of following a single sequence of specification
revisions, \textsc{\name} explores multiple candidate
specification states through three complementary mechanisms:
\ding{182}~\textbf{Where to search: Pareto-guided state selection}
maintains an archive of non-dominated specification states
with complementary strengths across program entities.
This preserves alternative refinement paths and prevents
premature commitment to a single solution.
\ding{183}~\textbf{What to refine: Dependency-constrained entity selection}
uses the program's call graph to restrict refinement to
entities whose callees have already been resolved.
A discounted-UCB bandit then adaptively allocates
the refinement budget among eligible entities.
\ding{184}~\textbf{How to refine: Error-diversified expansion}
generates multiple refinement candidates from different
subsets of failing tests, allowing the search to explore
alternative explanations for observed failures and reducing
sensitivity to noisy feedback.
Together, these mechanisms mitigate refinement interference
and enable improvements to accumulate across recursive iterations.

We evaluate \textsc{\name} on \textsc{DevEval+},
where behavioral specification quality is measured by
the correctness of code reconstructed from the generated
specifications. Across five backbone language models,
\textsc{\name} consistently outperforms strong documentation,
prompt-optimization, and coding-agent baselines.
In particular, it achieves a 94.9\% solve rate
with Claude-4.5-Sonnet and improves over the
strongest baseline by 5.7 percentage points
on average. We further evaluate whether the
optimized specifications remain useful beyond
the original code-generation task.
Results on cross-language translation and
new-feature implementation show that these
specifications provide reusable behavioral
knowledge that improves coding-agent performance
in different software engineering settings.
These findings suggest that refining behavioral
specifications provides a practical approach
to recursive self-improvement for coding agents,
without requiring model weight updates.

Our main contributions are summarized as follows:
\begin{itemize}[leftmargin=*, itemsep=0pt, topsep=0pt]
    \item We study \emph{specification-mediated recursive
    self-improvement} for coding agents, in which
    behavioral specifications are iteratively improved
    through code generation and execution feedback.
    We identify \emph{output coupling} as a central
    challenge that prevents naive iterative refinement
    from accumulating progress.

    \item We introduce \textsc{\name}, a dependency-guided
    multi-frontier tree search framework that combines
    Pareto-guided state selection, dependency-constrained
    entity selection, and error-diversified expansion
    to mitigate refinement interference and enable
    cumulative specification improvement.

    \item We demonstrate that \textsc{\name} consistently
    outperforms strong baselines on \textsc{DevEval+}
    across five backbone models, and that its optimized
    specifications improve downstream cross-language
    translation and new-feature implementation.
\end{itemize}

\vspace{-0.2cm}
\section{Problem Formulation}\vspace{-0.1cm}
\label{sec:problem}

We study \emph{behavioral specification optimization}: given source code, the goal is to produce, for every program entity, a natural-language specification from which a code-generation model can reconstruct a correct implementation without seeing the code.
Unlike documentation, whose objective is human readability, a specification is judged by \emph{downstream behavioral utility}---whether it contains enough precise semantic information for an LLM to synthesize correct code.

Consider a code module with $n$ program entities
$\mathcal{E} = \{e_1, \ldots, e_n\}$, such as functions, classes, or methods.
Let $G=(\mathcal{E},E)$ be the call graph, where $(e_i,e_j)\in E$ indicates that entity $e_i$ invokes entity $e_j$.
For evaluation, we assume a set of unit tests $\mathcal{T}$, and each test $t\in\mathcal{T}$ is attributed to a target entity through a mapping
$\alpha:\mathcal{T}\rightarrow\mathcal{E}$.
We denote by $\mathcal{T}_i=\{t\in\mathcal{T}:\alpha(t)=e_i\}$ the tests associated with entity $e_i$.
Let $\mathcal{D}=(d_1,\ldots,d_n)$ denote a specification bundle, where $d_i$ is the specification for entity $e_i$.
Given a code generation model $\mathcal{G}$, the generated implementation for entity $e_i$ is
$c_i = \mathcal{G}(\mathcal{D}, e_i)$,
and the full generated module is $\mathcal{C}(\mathcal{D})=(c_1,\ldots,c_n)$.

\begin{definition}[Behavioral Specification Optimization]
\label{def:agent_doc_gen}
The quality of the specification for entity $e_i$ is measured by its entity pass rate
$\phi_i(\mathcal{D})
=
\frac{1}{|\mathcal{T}_i|}
\sum_{t\in\mathcal{T}_i}
\mathbb{1}\!\left[\textsc{Pass}(c_i,t)\right]$.
An entity $e_i$ is \emph{resolved} under $\mathcal{D}$ if $\phi_i(\mathcal{D})=1$.
We write the resolved set as
$\mathcal{S}(\mathcal{D})
=
\{e_i\in\mathcal{E}:\phi_i(\mathcal{D})=1\}$.
The goal is to find the specification bundle that maximizes the number of resolved entities:
$\mathcal{D}^*
=
\arg\max_{\mathcal{D}}
|\mathcal{S}(\mathcal{D})|$.
\end{definition}

In addition to the solved-set objective, we also use the pass-rate vector
$\boldsymbol{\phi}(\mathcal{D})=(\phi_1(\mathcal{D}),\ldots,\phi_n(\mathcal{D}))$
to capture fine-grained per-entity progress, and the module-level pass rate
$\bar{\phi}(\mathcal{D})
=
\frac{1}{|\mathcal{T}|}
\sum_{t\in\mathcal{T}}
\mathbb{1}\!\left[\textsc{Pass}(c_{\alpha(t)},t)\right]$
to measure overall behavioral correctness.
This formulation yields a black-box optimization problem over the space of natural-language specifications:
a candidate specification bundle can only be judged through generated code and execution outcomes, while test failures provide structured feedback about what the specifications failed to make explicit.

\paragraph{Output Coupling.}
The central challenge is that program entities cannot be optimized independently.
First, \emph{dependency coupling} arises because if $e_i$ calls $e_j$, then the correctness of $c_i$ depends on the realized behavior of $c_j$.
Second, \emph{generation coupling} arises because the code generator conditions on the full specification bundle $\mathcal{D}$, so changing $d_j$ may alter the generated code for $e_i$ even without a direct call edge.
Together, these effects produce a \emph{whack-a-mole} phenomenon: refining the specification of one entity may improve that entity while destabilizing others.
Formal definitions are deferred to Appendix~\S\ref{app:output_coupling}.

\paragraph{Search Formulation.}
The optimization is difficult because specification quality is only observable \emph{after} code generation and execution.
This makes behavioral specification optimization a structured search problem over specification states.
Different partial solutions resolve different subsets of entities, and these resolved subsets determine which subsequent refinements are meaningful under dependency constraints.

\begin{figure}[t]\vspace{-0.3cm}
\centering
\includegraphics[width=\linewidth]{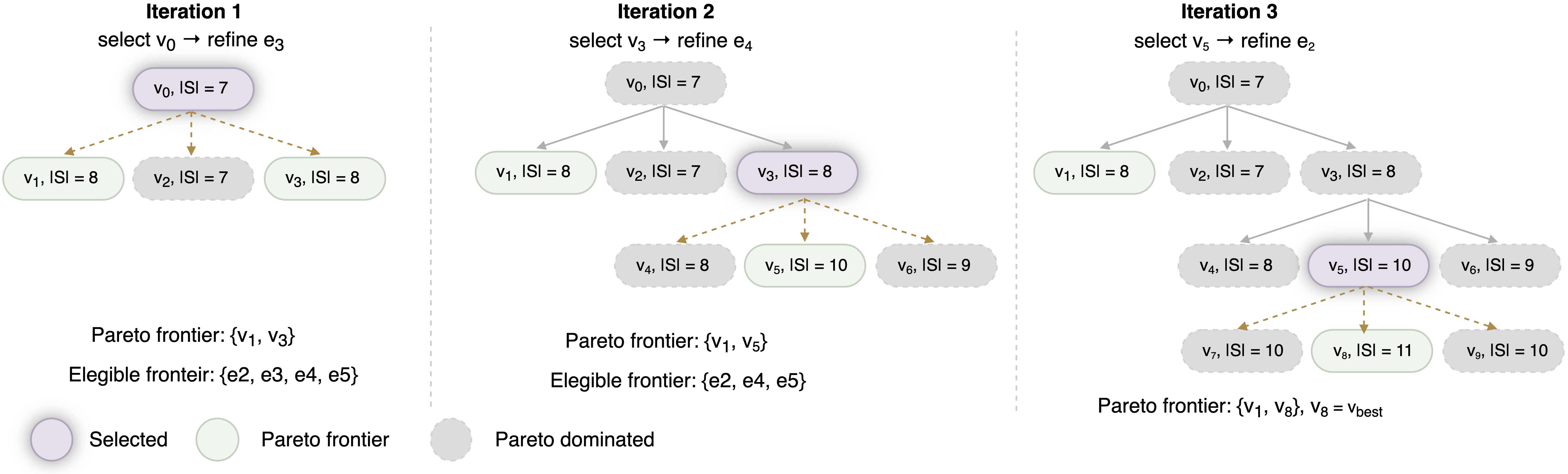}
\caption{\textbf{Search tree view of behavioral specification optimization.}
Each node is a specification state.
Different states solve different subsets of entities and therefore expose different eligible next refinements.
Rather than committing to a single rewrite trajectory, our solution preserves multiple competitive frontiers and expands them selectively.}
\label{fig:search_tree}\vspace{-0.3cm}
\end{figure}

\vspace{-0.2cm}

\vspace{-0.2cm}
\section{Proposed Method}\vspace{-0.1cm}
\label{sec:method}

We propose \textsc{\name}, a \emph{multi-frontier tree search} framework for behavioral specification optimization. \textsc{\name} instantiates the self-improvement loop of \S\ref{sec:intro} as search: each node is a specification state evaluated by executing the code generated from it, each edge is one self-revision driven by the failures of that code, and every revision starts from a state that earlier revisions produced. Figure~\ref{fig:search_tree} illustrates the search process: specification optimization proceeds not as a single rewrite trajectory, but as search over multiple partial solutions with different future refinement opportunities. By managing this search so that local refinements accumulate into stable global progress, \textsc{\name} directly targets the whack-a-mole behavior induced by output coupling.

\paragraph{State Representation.}
\textsc{\name} organizes optimization as tree search over specification states.
A node is
$v = (\mathcal{D}_v, \boldsymbol{\phi}_v, \mathcal{S}_v)$,
where $\mathcal{D}_v$ is the current specification bundle,
$\boldsymbol{\phi}_v \in [0,1]^n$ is the per-entity pass-rate vector,
and $\mathcal{S}_v$ is the set of resolved entities.
An edge $v \xrightarrow{e} v'$ denotes refining exactly one entity $e$:
\begin{equation}
d'_e \sim P_\theta(\cdot \mid d_e, \mathcal{F}_e(v)),
\qquad
d'_i = d_i \;\; \forall i \neq e,
\end{equation}
where $\mathcal{F}_e(v)$ denotes the failure feedback for entity $e$ at state $v$.
This single-entity transition is the basic search step in \textsc{\name}: it keeps refinements localized, makes feedback easier to attribute, and allows the method to explore multiple competing partial solutions in parallel.

\paragraph{Multi-Frontier Tree Search.}

Starting from a root node $v_0$ corresponding to the initial specifications, \textsc{\name} maintains a Pareto archive $\mathcal{A}$ of non-dominated states (initialized as $\{v_0\}$) together with a shared entity-selection bandit. Each iteration resolves three coupled decisions: \emph{where} to search (\S\ref{subsec:node_selection}), \emph{what} to refine (\S\ref{subsec:action_selection}), and \emph{how} to refine (\S\ref{subsec:expansion}). Concretely, the iteration (i) samples a parent node $v$ from $\mathcal{A}$; (ii) selects an eligible entity $e$ to refine via a discounted-UCB policy over per-entity gap-closing rates; and (iii) expands $v$ into $W$ children, each obtained by substituting a refined specification $d'_e$ into $\mathcal{D}_v$, regenerating the code of the full module from the updated specifications, and re-evaluating the full test suite to produce $\boldsymbol{\phi}_{v'}$. The $W$ children are then folded back into both upper levels: non-dominated children enter $\mathcal{A}$, introducing new frontier diversity for future state selection, and the best child's $\Delta\phi_e$ updates the bandit for future entity selection. Once $B$ iterations, \textsc{\name} returns the specification bundle $\mathcal{D}_v$ achieving the largest $|\mathcal{S}|$ over all visited states. Figure~\ref{fig:docsearch_overview} and Algorithm~\ref{alg:docsearch} summarize the procedure.

\begin{figure*}[t]\vspace{-0.2cm}
    \centering
    \includegraphics[width=\textwidth]{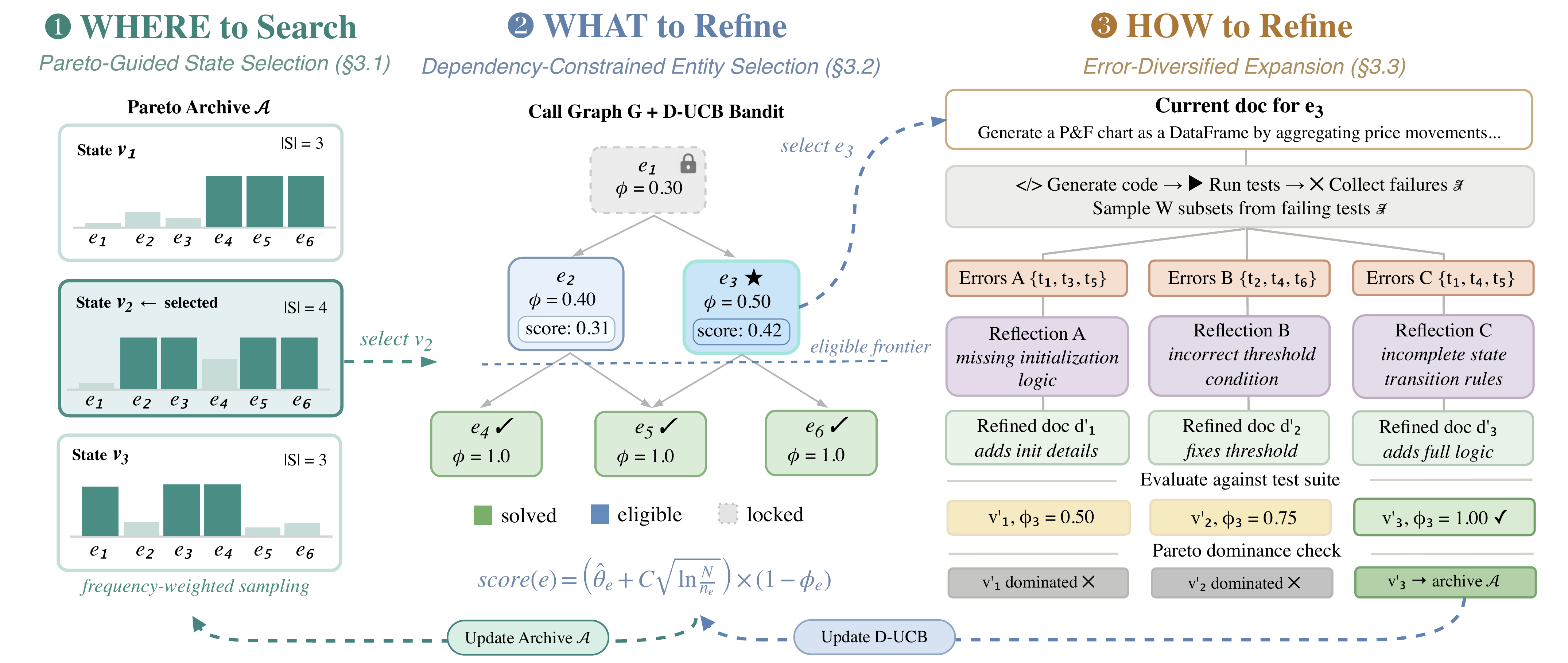}
    \caption{\textbf{Overview of \textsc{\name}.}
    Each iteration resolves three decisions:
    \emph{where} to search, \emph{what} to refine, and \emph{how} to refine.
    \ding{182} \textbf{Pareto-guided state selection} samples from an archive of non-dominated specification states with complementary strengths.
    \ding{183} \textbf{Dependency-constrained entity selection} only refines entities whose callees are already solved.
    \ding{184} \textbf{Error-diversified expansion} generates multiple children from different subsets of failing tests.
    Strong children are fed back into both the archive and the entity-selection policy.}
    \label{fig:docsearch_overview}
    \vspace{-0.6cm}
\end{figure*}

\subsection{Pareto-Guided State Selection}
\label{subsec:node_selection}

The first step is to decide where to search next, i.e., which previously visited state to expand.
Because improving one entity may degrade another, scalar ranking is often too coarse.
We therefore maintain an archive $\mathcal{A}$ of non-dominated states under coordinate-wise dominance over pass-rate vectors:

\begin{equation}
\boldsymbol{\phi}_{v'} \succeq \boldsymbol{\phi}_v
\quad \Longleftrightarrow \quad
\phi_i(\mathcal{D}_{v'}) \ge \phi_i(\mathcal{D}_v)
\;\; \text{for all } i.
\label{eq:dominance}
\end{equation}

A new child enters the archive if no existing state dominates it; any states that it dominates are removed.
This archive preserves \emph{complementary frontier states}.
One state may already solve low-level utility entities, while another may have progressed further on higher-level orchestration logic.
Crucially, different archive states induce different feasible next refinements because they solve different subsets of entities.
Maintaining multiple such states prevents premature commitment to one unstable trajectory and thus helps to escape local optima (refer to \S\ref{subsec:pareto_theory} for detailed analysis).

\paragraph{Frequency-Weighted Sampling.}
To select which archive node to expand, we sample from the Pareto front of $\mathcal{A}$ (Algorithm~\ref{alg:pareto_sample}).
For each entity $e_i$, we identify the archive node(s) attaining the highest $\phi_i$ and assign credit to those node(s), splitting it evenly in the case of ties.
A node's sampling weight is the total credit it accumulates across entities.
This biases expansion toward nodes that are frontier-best on more entity-wise dimensions, thereby preserving complementary search frontiers rather than collapsing prematurely to a single trajectory.

\vspace{-0.1cm}
\subsection{Dependency-Constrained Entity Selection}\vspace{-0.1cm}
\label{subsec:action_selection}

Given a selected state $v$, the second step decides \emph{what} entity to refine.
Refining a caller before its callees are stabilized often yields misleading feedback.
We therefore restrict candidates to an \emph{eligibility frontier}:
{\setlength{\abovedisplayskip}{4pt}
\setlength{\belowdisplayskip}{4pt}
\setlength{\abovedisplayshortskip}{1pt}
\setlength{\belowdisplayshortskip}{1pt}
\begin{equation}
\mathcal{E}_{\textup{elig}}(v)
=
\big\{
e \in \mathcal{E} :
e \notin \mathcal{S}_v,\;
\forall (e,e_j)\in E,\;
e_j \in \mathcal{S}_v
\big\}.
\label{eq:frontier}
\end{equation}
}
That is, an entity is eligible only if it remains unsolved and all of its callees are already resolved.
This enforces a callee-before-caller refinement order and mitigates dependency-induced interference.
Recursive calls are handled by contracting each strongly connected component into a jointly refined meta-entity (Appendix~\ref{app:scc}).

When multiple entities are eligible, \textsc{\name} allocates budget adaptively using a discounted UCB policy~\citep{garivier2011upper}.
The key reward is normalized by the remaining gap:
$\hat r_e = \frac{\Delta \phi_e}{1-\phi_e(\mathcal{D}_v)}$,
where $\Delta \phi_e$ is the observed improvement after refining entity $e$ at the selected state $v$, and $\phi_e(\mathcal{D}_v)$ is the pass rate of $e$ at $v$.
This separates intrinsic repairability from the trivial effect of having a large remaining error.
The resulting score is
{
\setlength{\abovedisplayskip}{1pt}
\setlength{\belowdisplayskip}{1pt}
\begin{equation}
\label{eq:ucb_score}
\textup{score}(e) = \underbrace{\left(\hat{\theta}_e + C\sqrt{\frac{\ln N_\gamma}{n_\gamma(e)}}\right)}_{\textup{UCB estimate of gap-closing rate}} \times \underbrace{(1 - \phi_e(\mathcal{D}_v))}_{\textup{remaining gap}},
\end{equation}}
where $n_\gamma(e)$ is the discounted pull count of entity $e$,
$N_\gamma=\sum_{e'} n_\gamma(e')$ is the total discounted pull count over all entities,
and $\hat{\theta}_e$ is the discounted empirical mean of $\hat r_e$, i.e., the estimated gap-closing rate of refining $e$ under the current search regime (detailed in Appendix~\ref{app:bandit_details}). The bandit maintains a single set of statistics shared across all state nodes, with discounting ensuring that estimates remain responsive as the search context evolves.
The policy therefore balances uncertainty, past success, and remaining improvement potential.

\vspace{-0.3cm}
\subsection{Error-Diversified Expansion}\vspace{-0.3cm}
\label{subsec:expansion}

Given a selected state $v$ and entity $e$, the third step expands $v$ into $W$ child states, each representing a distinct refinement of $d_e$.
The key challenge is that failing tests are diagnostically heterogeneous: some directly expose the behavioral detail missing from the specification, while others are redundant, weakly informative, or induced by cascading dependency errors.
This is consistent with prior findings that test cases differ substantially in fault-localization value, LLM-generated tests often cluster around similar patterns, and adding more tests can even hurt diagnosis when uninformative cases dilute the signal~\citep{testchain2024,jiang2012testprioritization,wang2018testselection}. More broadly, LLMs are known to be sensitive to irrelevant or noisy context in both in-context learning and RAG settings~\citep{shi2023largelanguagemodelseasily,gao2024on,guo2025retrievalaugmentedgenerationnoisyincontext}.

Formally, let $\mathcal{F}_e(v)$ denote the failing tests for entity $e$ at state $v$, and suppose it admits an unknown partition $\mathcal{F}_e(v) = \mathcal{F}_e^+ \cup \mathcal{F}_e^-$, where $\mathcal{F}_e^+$ contains informative failures and $\mathcal{F}_e^-$ noisy ones.
Let $q(\mathbf{e}) = P(\text{successful refinement} \mid \mathbf{e})$ denote the success probability when conditioning on an error batch $\mathbf{e}$.
The key point is that noise is not merely unhelpful but actively corrupting: under mild assumptions, mixing noisy failures into an informative batch strictly degrades refinement quality, i.e., for any nonempty $\mathbf{e}^+ \subseteq \mathcal{F}_e^+$ and $\mathbf{e}^- \subseteq \mathcal{F}_e^-$,
$q(\mathbf{e}^+ \cup \mathbf{e}^-) < q(\mathbf{e}^+)$ when $|\mathbf{e}^-| > 0$ (Appendix~\S\ref{app:proof-identifiability}).

Since the partition is unknown, \textsc{\name} hedges by sampling $W$ distinct subsets
$\{\mathbf{e}_1,\ldots,\mathbf{e}_W\}$ from $\mathcal{F}_e(v)$ and generating one refinement candidate per subset.
If at least one sampled batch is cleaner than the full failure set, diversified expansion achieves strictly higher \emph{best one-step branch quality} than repeatedly conditioning all $W$ branches on $\mathcal{F}_e(v)$.
Formally, if there exists $j^\star \in \{1,\ldots,W\}$ such that
$q(\mathbf{e}_{j^\star}) > q(\mathcal{F}_e(v))$, then Appendix~\S\ref{app:proof_diversity} shows
{\setlength{\abovedisplayskip}{4pt}
\setlength{\belowdisplayskip}{4pt}
\setlength{\abovedisplayshortskip}{1pt}
\setlength{\belowdisplayshortskip}{1pt}
\begin{equation}
\label{eq:diversified_advantage_main}
\max_{1\le j\le W} q(\mathbf{e}_j)
\;>\;
\max_{1\le j\le W} q(\mathcal{F}_e(v))
\;=\;
q(\mathcal{F}_e(v)).
\end{equation}
}
The gain arises not from improving every branch, but from increasing the chance that \emph{at least one} branch is conditioned on a diagnostically cleaner signal than the full failure set.
Implementation details are given in Appendix~\ref{app:expansion_details}.

\section{Experiments}
\vspace{-0.25cm}
\label{sec:experiments}

Our experiments test the claims behind specification-mediated self-improvement: that improvements accumulate instead of oscillating (\S\ref{subsec:main_results} and \S\ref{subsubsec:visualization}), that each component of \textsc{\name} contributes to the stability of the loop (\S\ref{subsubsec:ablation}), and that the improved specifications make the agent better at tasks outside the loop (\S\ref{subsec:downstream}).

\vspace{-0.1cm}
\subsection{Setup}\vspace{-0.2cm}
\label{subsec:setup}

\noindent \textbf{Benchmark.}
We evaluate \textsc{\name} on \textsc{DevEval+},
a benchmark for evaluating behavioral specifications
through code reconstruction and execution-based verification
across modules with complex inter-entity dependencies. It consists of \textbf{20 Python modules}, \textbf{247 entities}, and \textbf{760 test cases}. The modules are drawn from DevEval~\citep{deveval} and supplemented with additional GitHub repositories to broaden domain coverage, including finance, biological analysis, and signal processing. \textsc{DevEval+} is structurally nontrivial, with an average dependency depth of 4.2, making it a suitable testbed for studying output coupling under repository-scale generation (full details in Appendix~\ref{app:benchmark}).

\begin{table}[t]\vspace{-0.4cm}
     \centering
     \caption{Main results on \textsc{DevEval+}.}
     \label{tab:main_results}
     \renewcommand\arraystretch{1.1}
     \renewcommand\tabcolsep{2pt}
     \resizebox{0.9\textwidth}{!}{
     \begin{tabular}{l|cc|cc|cc|cc|cc}
     \Xhline{1.2pt}
     \rowcolor{CadetBlue!25}
     & \multicolumn{2}{c|}{\textbf{Qwen3-235B-A22B}} & \multicolumn{2}{c|}{\textbf{Gemini-2.5-Flash}} & \multicolumn{2}{c|}{\textbf{GPT-4o}} &
     \multicolumn{2}{c|}{\textbf{Gemini-3.1-Pro}} & \multicolumn{2}{c}{\textbf{Claude-4.5-Sonnet}} \\
     \rowcolor{CadetBlue!25}
     \textbf{Method} & $|\mathcal{S}|$ & $\bar{\phi}$ & $|\mathcal{S}|$ & $\bar{\phi}$ & $|\mathcal{S}|$ & $\bar{\phi}$ & $|\mathcal{S}|$ & $\bar{\phi}$ & $|\mathcal{S}|$ &
     $\bar{\phi}$ \\
     \Xhline{1.2pt}
     \multicolumn{11}{l}{\textit{Static Methods (No Optimization)}} \\
     \hline
     \rowcolor{gray!8}
     RepoAgent & 22.3 & 34.9 & 19.2 & 30.6 & 27.8 & 39.9 & 33.7 & 45.5 & 35.3 & 47.3 \\
     DocAgent & 16.1 & 31.1 & 14.5 & 27.2 & 22.2 & 37.2 & 26.8 & 41.8 & 28.9 & 44.7 \\
     \hline
     \multicolumn{11}{l}{\textit{Prompt Optimization (Monolithic)}} \\
     \hline
     \rowcolor{gray!8}
     MIPROv2 & 28.5 & 30.4 & 30.3 & 32.3 & 29.8 & 31.4 & 42.8 & 43.8 & 50.0 & 52.0 \\
     GEPA & 34.8 & 35.6 & 32.1 & 33.1 & 35.6 & 37.6 & 40.7 & 42.5 & 50.4 & 52.9 \\
     \rowcolor{gray!8}
     ACE & 58.9 & 59.8 & 61.5 & 62.4 & 62.8 & 64.3 & 68.6 & 70.1 & 74.9 & 76.6 \\
     \hline
     \multicolumn{11}{l}{\textit{Coding Agents (Commercial)}} \\
     \hline
     \rowcolor{gray!8}
     Cursor & -- & -- & 15.8 & 25.1 & -- & -- & 27.5 & 37.1 & 30.6 & 40.0 \\
     \quad + Topo & -- & -- & 21.2 & 30.6 & -- & -- & 37.6 & 46.6 & 37.7 & 46.7 \\
     \rowcolor{gray!8}
     \quad + Test & -- & -- & 31.0 & 39.0 & -- & -- & 44.9 & 50.8 & 47.8 & 52.3 \\
     \quad + Topo + Test & -- & -- & 39.4 & 46.3 & -- & -- & 51.1 & 56.4 & 52.1 & 56.8 \\
     \rowcolor{gray!8}
     Claude Code & -- & -- & -- & -- & -- & -- & -- & -- & 40.1 & 50.1 \\
     \quad + Topo & -- & -- & -- & -- & -- & -- & -- & -- & 45.1 & 55.0 \\
     \rowcolor{gray!8}
     \quad + Test & -- & -- & -- & -- & -- & -- & -- & -- & 55.3 & 62.2 \\
     \quad + Topo + Test & -- & -- & -- & -- & -- & -- & -- & -- & 61.9 & 68.4 \\
     \hline
     \multicolumn{11}{l}{\textit{Iterative Refinement (Entity-wise)}} \\
     \hline
     \rowcolor{gray!8}
     Iterative & 41.0 & 45.0 & 37.8 & 45.1 & 49.7 & 53.8 & 57.5 & 59.9 & 59.5 & 60.6 \\
     \quad + Topo & 43.5 & 46.8 & 40.5 & 46.9 & 54.3 & 57.9 & 67.5 & 68.8 & 64.4 & 64.5 \\
     \hline
\multicolumn{11}{l}{\textit{Behavioral Specification Optimization}} \\
\hline
\rowcolor{gray!8}
\textsc{DocSearch}$^{\dagger}$ & 81.5 & 84.9 & 82.2 & 85.7 & 85.5 & 88.4 & 87.2 & 89.5 & 90.4 & 91.8 \\
\textsc{\name} (ours) & \textbf{88.9} & \textbf{91.5} & \textbf{89.6} & \textbf{92.1} & \textbf{90.9} & \textbf{93.2} & \textbf{91.0} & \textbf{93.1} & \textbf{94.9} & \textbf{96.1} \\
     \Xhline{1.2pt}
     \end{tabular}
     }
     \vspace{-0.6cm}
     \end{table}

\noindent \textbf{Baselines.}
We compare \textsc{\name} against five groups of baselines:
\begin{itemize}[itemsep=1.5pt,topsep=0pt,parsep=0pt,leftmargin=*]
\vspace{-0.1cm}
    \item \textit{Static documentation methods}: RepoAgent~\citep{repoagent} and DocAgent~\citep{docagent}, which generate human-oriented documentation without execution feedback; we evaluate their documentation in the specification role.

    \item \textit{Prompt optimization methods}: MIPROv2~\citep{miprov2}, GEPA~\citep{gepa}, and ACE~\citep{ace}, which optimize the instruction context used to generate specifications, rewriting the full specification bundle monolithically at each iteration based on code-generation feedback.

    \item \textit{Commercial coding agents}: Cursor~\citep{cursor} and Claude Code~\citep{Claudecode}, evaluated in four variants: (1)~\emph{base}, using default settings; (2)~\emph{+Topo}, which supplies reverse topological order as additional context; (3)~\emph{+Test}, which prompts the agent to generate tests and iteratively refine code; and (4)~\emph{+Topo+Test}, which combines both augmentations.\footnote{Cursor supports Gemini-2.5-Flash, Gemini-3.1-Pro, and Claude-4.5-Sonnet, but not Qwen3-235B-A22B or GPT-4o; Claude Code supports only Claude models. Missing entries in Table~\ref{tab:main_results} follow from these platform constraints.}

    \item \textit{Iterative refinement baselines}: Sequential refinement with execution feedback, using either random entity selection (\textsc{Iterative}) or reverse topological order (\textsc{Iterative+Topo}).

    \item \textit{Behavioral specification optimization}: \textsc{DocSearch}~\citep{cheng2026docsearch}, the only baseline that optimizes the same objective, formulated there as agent-oriented documentation generation: the text is optimized for the correctness of code generated from it, under execution feedback. It performs a dependency-guided bi-level search that refines entities in reverse topological order with error-diversified beam expansion, and commits a refinement only if no entity regresses. We run the official implementation on \textsc{DevEval+}; see \S\ref{app:baselines} for details.
\end{itemize}

\begingroup
\renewcommand{\thefootnote}{$\dagger$}
\footnotetext{We re-run \textsc{DocSearch}~\citep{cheng2026docsearch} with its official implementation under the protocol of this paper on the revised \textsc{DevEval+} release. This release, designed to stress-test \textsc{\name} with larger modules, denser test suites, and additional corner cases, thus yields \textsc{DocSearch} scores that deviate modestly (by 5.6 points on average) from those reported in its original paper.}
\endgroup
Across all methods, final performance is evaluated with ground-truth test suites that remain hidden from the search procedure. During optimization, methods receive only LLM-generated tests. Additional details on test generation appear in \S\ref{app:testcases}, while implementation details and prompts for all baselines are provided in \S\ref{app:baselines}.
Additional experiments, such as the fidelity of the self-generated improvement signal (\S\ref{sec:sensitivity_pseudo_test_quality}) and module-level case studies (\S\ref{app:module_case_study}), are included in the Appendix.

\noindent \textbf{Metrics.}
We report two complementary metrics: \textbf{Solve Rate} $|\mathcal{S}|$,
the proportion of entities achieving all tests passed, and \textbf{Pass Rate}
$\bar{\phi}$,
the fraction of test cases passed across the entire module.
The former measures complete per-entity correctness; the latter provides a
finer-grained, continuous measure of progress at the test-case level.

\vspace{-0.15cm}
\subsection{Main Results}
\vspace{-0.15cm}
\label{subsec:main_results}

Table~\ref{tab:main_results} presents comprehensive comparisons across methods and base LLMs (50 rollouts).
\textsc{\name} achieves the highest solve rates across all models: 88.9\% with Qwen3-235B-A22B, 89.6\% with Gemini-2.5-Flash, 90.9\% with GPT-4o, 91.0\% with Gemini-3.1-Pro, and 94.9\% with Claude-4.5-Sonnet. The results reveal a clear hierarchy of failure modes.
\textbf{Static methods} (14.5--35.3\%) expose a \emph{documentation--specification gap}: documentation optimized for human readability appears imprecise about behavior to serve as a specification for code-synthesis models, which interpret underspecified phrases literally but resolve ambiguities unpredictably.
\textbf{Monolithic prompt optimizers} improve substantially---ACE reaches 62.8\% on GPT-4o via its generate-reflect-curate pipeline---but suffer from \emph{attribution failure} (cannot identify which entity caused a test failure) and \emph{coupling amplification} (a single rewrite alters all entities simultaneously).
\textbf{Commercial agents} with full augmentation (Cursor+Topo+Test: 52.1\%, Claude Code+Topo+Test: 61.9\% on Claude-4.5-Sonnet) confirm that topological ordering and test feedback are valuable signals, yet both still trail \textsc{\name} by 33.0--42.8\%.
\textbf{Iterative refinement} with topological ordering reaches 54.3\% on GPT-4o but plateaus due to the whack-a-mole phenomenon: without Pareto archiving and diversified exploration, correct ordering alone cannot overcome interference at the same topological level.
\textbf{\textsc{DocSearch}} is the strongest baseline (81.5--90.4\%) and the only one that optimizes the same objective. It already orders refinement callee-before-caller and diversifies the error context, but it follows a single search path and commits a refinement only if no entity regresses. \textsc{\name} exceeds it on every backbone, by 3.8--7.4 points in solve rate (5.7 on average) and 3.6--6.6 points in pass rate. The margin is largest on the two weakest backbones (7.4 points on both Qwen3-235B-A22B and Gemini-2.5-Flash) and narrows to 3.8--4.5 points on the two strongest, consistent with the role of the Pareto archive: when no candidate passes a no-regression test, a single-path search must abandon the entity, whereas \textsc{\name} can continue from another frontier.
The improvement pattern is remarkably consistent across LLMs: despite substantial differences in base capability, \textsc{\name} delivers absolute solve-rate gains of 23.5--49.1 percentage points over the best iterative baseline across all models, consistent with the hypothesis that the method targets a broadly shared structural difficulty.

\begin{figure*}[t]\vspace{-0.3cm}
\vspace{-0.2cm}
\captionsetup{skip=2pt}
\centering

\begin{minipage}[c]{0.58\textwidth}
\centering
\includegraphics[width=\linewidth]{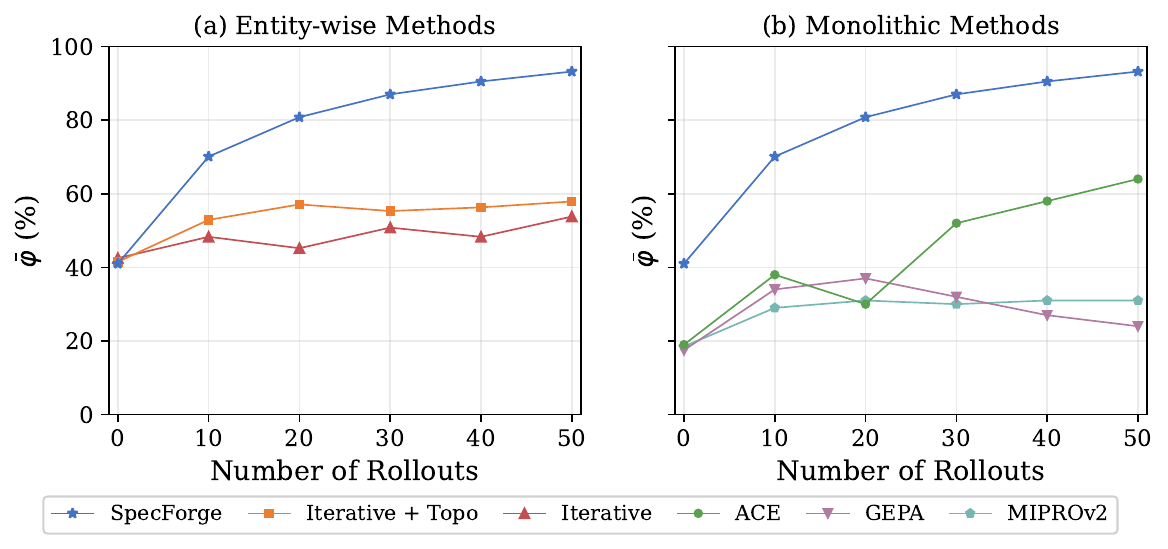}
\end{minipage}%
\hspace{0.01\textwidth}%
\begin{minipage}[c]{0.40\textwidth}
\centering
\includegraphics[width=\linewidth]{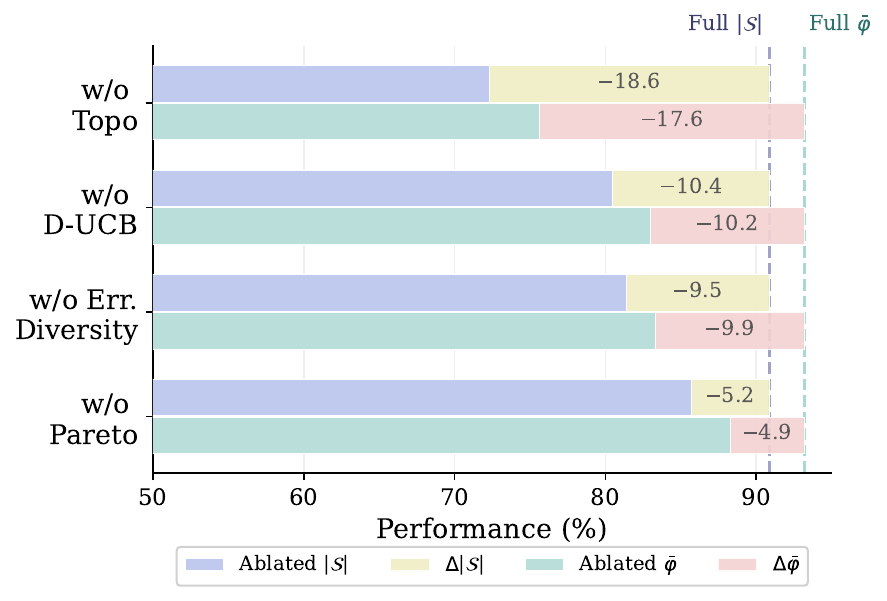}
\end{minipage}

\caption{%
\textbf{Convergence analysis} (GPT-4o, \textsc{DevEval+}).
(a)~Iterative refinement baselines oscillate due to refinement interference.
(b)~Monolithic methods exhibit catastrophic instability.
\textsc{\name} achieves steady upward progress in both comparisons.
\textbf{(c) Ablation study} (GPT-4o, 10-module subset).
Components ordered by contribution (large to small);
dashed lines mark the full model ($|\mathcal{S}|{=}90.9\%$,
$\bar\phi{=}93.2\%$). Reverse topological ordering is the largest single contributor ($-18.6\%$).}
\vspace{-0.4cm}
\label{fig:convergence_ablation}
\end{figure*}

\vspace{-0.15cm}
\subsection{Token Budget Efficiency Analysis}\vspace{-0.15cm}
\label{subsubsec:visualization}
Figure~\ref{fig:convergence_ablation} plots cross-module average pass rate $\bar{\phi}$ against budget step, evaluated on ground-truth tests at every 10 rollouts. Among entity-wise methods (Figure~\ref{fig:convergence_ablation}(a)), Iterative refinement oscillates between 45--54\% as random entity selection triggers generation coupling---modifying one entity's specification perturbs code generation for unrelated entities. Iterative+Topo eliminates the most direct interference by enforcing callee-before-caller order, rising to 57\% by $B{=}20$, but plateaus once all remaining candidates are callers at the same topological level where sibling coupling persists. Among monolithic methods (Figure~\ref{fig:convergence_ablation}(b)), MIPROv2 plateaus at 31\% by $B{=}20$ as its Bayesian surrogate exhausts the candidate space under single-instance optimization; GEPA peaks at 37\% ($B{=}20$) then declines monotonically to 24\% because its graph evolution lacks running-best retention, making exploration into worse regions irreversible; ACE reaches the highest monolithic score (64\%) but exhibits repeated regressions (e.g., $38\%$ to $30\%$ between $B{=}10$ and $20$) where each monolithic rewrite simultaneously fixes and breaks entities, recovering only through playbook accumulation of failure-derived rules. \textsc{\name} rises steadily from 41\% to 93\% across both panels: the steepest gains occur in the first 10 rollouts ($+29\%$) as D-UCB exploration combined with topological constraints rapidly solves leaf entities, with diminishing but sustained progress thereafter as the Pareto archive preserves complementary frontier states, converting exploration cost into eventual progress rather than permanent regression.
\vspace{-0.15cm}
\subsection{Component Contribution Analysis}\vspace{-0.15cm}
\label{subsubsec:ablation}

\begin{figure*}[t]\vspace{-0.4cm}
\centering

\begin{minipage}[c]{0.5\textwidth}
\centering
\includegraphics[width=\linewidth]{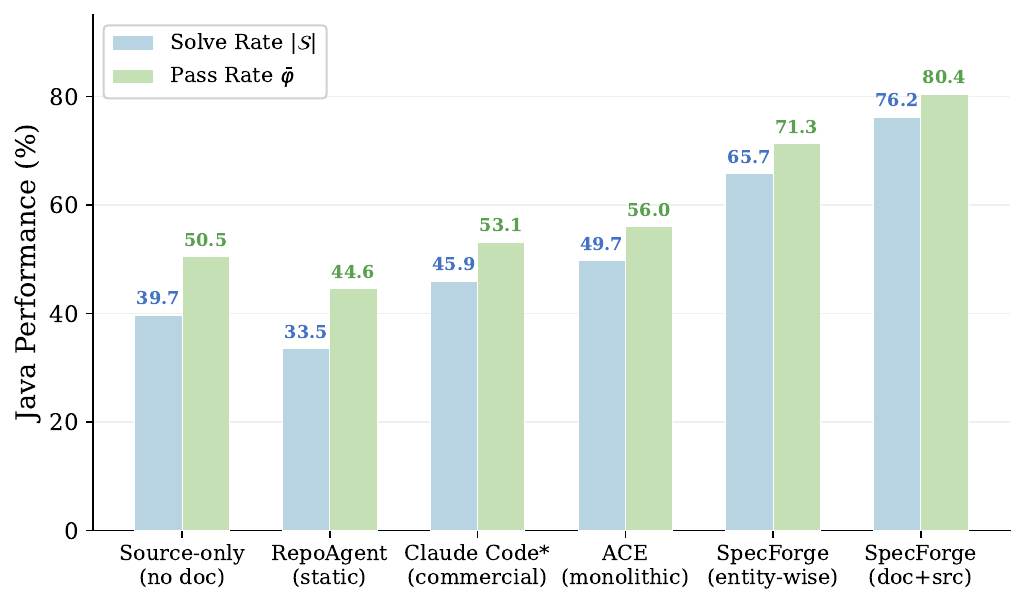}
\end{minipage}%
\hspace{0.02\textwidth}%
\begin{minipage}[c]{0.45\textwidth}
\centering
\includegraphics[width=\linewidth]{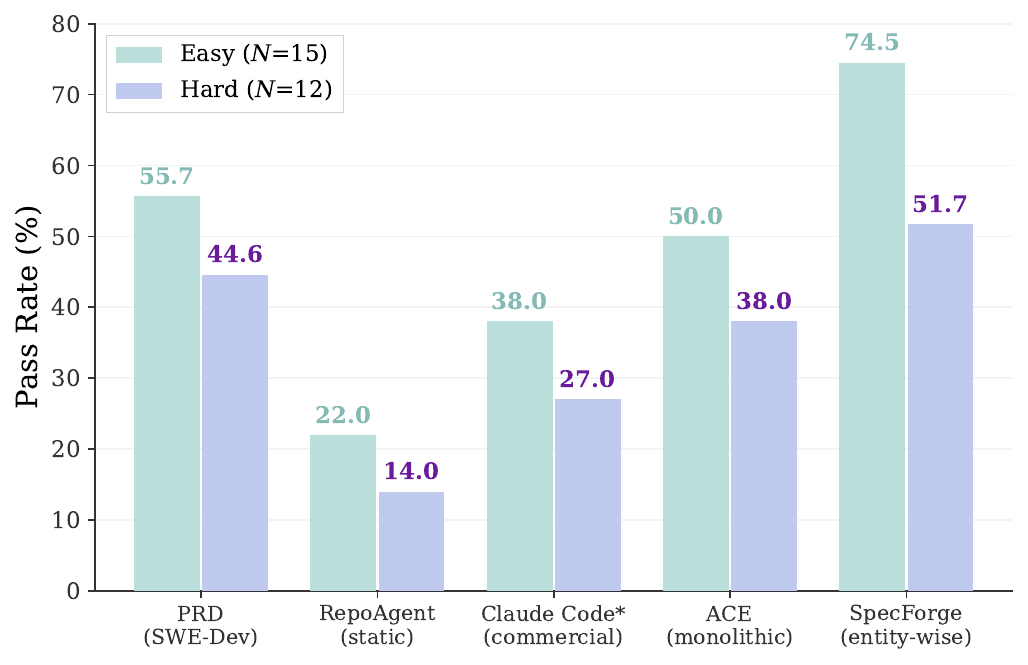}
\end{minipage}

\caption{%
\textbf{(Left) Cross-language translation}.
Optimized specifications ($+26.0\%$ over Source-only) transfer
effectively across languages; combining specifications with source code
achieves the best results (76.2\%).
\textbf{(Right) New-feature implementation} on \textsc{SWE-Dev}.
\textsc{\name}-generated specifications surpass the default PRD provided by the \textsc{SWE-Dev} benchmark,
achieving the highest pass rate on both difficulty levels despite
never accessing the PRD or target tests.}\vspace{-0.4cm}
\label{fig:downstream}
\vspace{-0.2cm}
\end{figure*}

Figure~\ref{fig:convergence_ablation}(c) isolates each component's contribution.
\textbf{Reverse topological order} ($-18.6\%$) is the structural foundation: without dependency constraints, the search refines callers before their callees are stabilized, allowing interference to propagate through the entire call chain.
\textbf{D-UCB bandit} ($-10.4\%$) demonstrates that adaptive budget allocation is critical: its gap-closing rate estimate $\hat{\theta}_e$ learns entity-level repairability and directs budget accordingly.
\textbf{Error diversity} ($-9.5\%$) addresses exploration collapse: diversified batch sampling increases the probability of encountering informative error combinations (Appendix~\S\ref{app:module_case_study}).
\textbf{Pareto archive} ($-5.2\%$) preserves frontier diversity, enabling recovery from suboptimal early refinements; its value increases on harder modules with deeper dependency chains.

\vspace{-0.25cm}
\subsection{Downstream Utility of Optimized Specifications}\vspace{-0.15cm}
\label{subsec:downstream}

We evaluate whether \textsc{\name}-optimized specifications transfer beyond the original code-generation setting through two downstream tasks: cross-language translation (\S\ref{subsubsec:cross_lang}) and new-feature implementation (\S\ref{subsubsec:new_feature}).
Both tasks compare the outputs of four methods from the main evaluation---RepoAgent, Claude Code$^*$\footnote{Claude Code$^*$ denotes the +Topo+Test configuration running on Claude-4.5-Sonnet, as Claude Code does not support GPT-4o.}, ACE, and \textsc{\name}---with GPT-4o as the code-generation backbone.

\vspace{-0.15cm}
\subsubsection{Cross-Language Translation}\vspace{-0.15cm}
\label{subsubsec:cross_lang}

\textbf{Experimental Setup.}
We evaluate cross-language transfer on \textsc{DevEval+-Java}, a JUnit~5
test harness covering a 16-module subset of \textsc{DevEval+} (208 entities,
680 test cases). We exclude modules whose Python implementations depend
on libraries without direct Java equivalents. Each harness mirrors the original Python tests, preserving input data and expected outputs to ensure semantic equivalence~(Appendix~\ref{app:deveval-java}).
We evaluate \textbf{Source-only} translation (Python source
$\to$ Java, no specification) alongside the four
methods above.
For \textsc{\name} we additionally test a
\textbf{Spec+Source} variant that provides both
the specification and the source code.

\textbf{Results and Analysis.}
Figure~\ref{fig:downstream}(left) reports both solve rate $|\mathcal{S}|$ and pass rate $\bar\phi$ for each condition.
Source-only translation achieves $39.7\%$ / $50.5\%$ ($|\mathcal{S}|$ / $\bar\phi$), while static documentation (RepoAgent, $33.5\%$ / $44.6\%$) performs \emph{worse} on both metrics---imprecise human-oriented descriptions introduce ambiguities absent from source code.
Performance improves steadily with optimization sophistication: Claude Code$^*$ ($45.9\%$ / $53.1\%$), ACE ($49.7\%$ / $56.0\%$), and \textsc{\name} ($65.7\%$ / $71.3\%$, $+26.0\%$ solve rate over Source-only).
Per-entity optimization produces precise behavioral specifications describing \emph{what} each function computes rather than \emph{how} it computes in Python, freeing the LLM to choose Java-appropriate implementations.
The Spec+Source combination achieves the best results ($76.2\%$ / $80.4\%$), as specifications and source code provide complementary ``what'' and ``how'' signals.
The quality gradient (static $<$ commercial $<$ monolithic $<$ entity-wise) is preserved across languages, confirming that \textsc{\name}'s precision reflects higher-quality behavioral specifications.

\vspace{-0.1cm}
\subsubsection{New Feature Implementation}\vspace{-0.1cm}
\label{subsubsec:new_feature}

Whereas cross-language translation evaluates specifications on \emph{re-implementing} existing code in a different language, real-world development more often requires implementing \emph{new} functionality atop an existing codebase, with specifications of the existing code as guidance.
We therefore evaluate on \textsc{SWE-Dev}~\citep{swe-dev}, a benchmark in which an LLM must implement new functions---specified by a Product Requirements Document (PRD)---within real open-source packages, validated against developer-written test suites.

\textbf{Experimental Setup.}
We select 10 packages from \textsc{SWE-Dev}, comprising 27~feature-implementation instances (15 easy and 12 hard).
For each package, we run all four methods on the \emph{existing} source code to produce behavioral specifications of the current codebase; these specifications are then provided as context when the LLM implements new functions.
We compare two conditions, which share the stubbed source and differ only in the accompanying text (Appendix~\ref{app:swedev}):
\textbf{(i)}~\textbf{Spec+Source}: stubs paired with the signature and docstring of each target function, plus the specifications of the existing codebase produced by each of the four methods;
\textbf{(ii)}~\textbf{PRD+Source}: stubs paired with the default PRD provided by the \textsc{SWE-Dev} benchmark.
We report Pass@1 following \textsc{SWE-Dev}'s evaluation protocol, broken down by difficulty level.

\textbf{Results and Analysis.}
Figure~\ref{fig:downstream}(right) reports Pass@1 broken down by difficulty level.
Among the four methods, the quality gradient observed in the main evaluation and cross-language experiments persists:
RepoAgent (22.0\% easy, 14.0\% hard), Claude Code* (38.0\%, 27.0\%), and ACE (50.0\%, 38.0\%) yield progressively larger improvements,
while \textsc{\name} achieves the highest performance (74.5\% easy, 51.7\% hard), surpassing even the \textsc{SWE-Dev} reference PRD (55.7\% easy, 44.6\% hard) by +18.8 and +7.1 percentage points, respectively. Appendix~\ref{app:swedev_analysis} further analyzes why \textsc{\name} outperforms the PRD.
\vspace{-0.25cm}
\section{Related Work}\vspace{-0.25cm}
\label{sec:related}

\noindent \textbf{Recursive Self-Improvement.}
Self-improving LLM systems differ first in the substrate into which an improvement is written. \emph{Weight-space} methods write into model parameters: the model is fine-tuned on data it generated itself~\citep{wang2023selfinstruct, zweiger2025seal}, on rewards derived from its own outputs~\citep{zuo2025ttrl}, or on rewards from a grounded environment~\citep{qi2025webrl}. The gain is amortized across tasks, but it cannot be read, diffed, or reverted. \emph{Context-space} methods keep the weights frozen and write into the harness instead, through verbal self-refinement~\citep{madaan2023selfrefine, shinn2023reflexion, feng2025naturallanguagerl}, prompt search and textual gradients~\citep{yang2024llmoptimizers, yuksekgonul2024textgrad, cheng2024trace, agarwal2024promptwizard, miprov2, gepa}, experience memory~\citep{ace}, skill and tool synthesis~\citep{wang2023voyager}, or search over the whole scaffold~\citep{hu2024adas}; at the frontier, the procedure that performs the update is itself rewritten~\citep{fernando2024promptbreeder, zelikman2023stop, zhang2025dgm}. The improvement signal ranges from the model's own judgment~\citep{bai2022constitutional, yuan2024selfrewarding} to external verifiers such as tests~\citep{chen2023codet, chen2024selfdebug, cheng2026smallbutmighty}, compilers~\citep{le2022coderl, haluptzok2023teach}, and environments~\citep{wang2023voyager, ma2024eureka}. \textsc{\name} is a context-space method with an execution-based improvement signal and a fixed update procedure. The key difficulty in such a recursive self-improvement loop is that behavioral specifications are inherently coupled through the code generated from them, so that an improvement to one can cause a global regression and make the iterative process oscillate.

\noindent \textbf{Coding Agents.}
Coding is among the most rapidly advancing capabilities of LLM agents, in large part because code is executable, verifiable, and searchable: a candidate can be run, checked against tests, and improved by search. Agents such as Cursor, Codex, and Claude Code~\citep{cursor, codex, Claudecode} now resolve issues and implement features across entire repositories~\citep{jimenez2024swebench, yang2024sweagent, zhang-etal-2023-repocoder, deveval, swe-dev}. The same property is carrying agents beyond conventional software, wherever the artifact can be expressed as a program or other interpretable text: games~\citep{chi2026gamedevbench}, explanatory videos rendered from animation code~\citep{ku2025theoremexplainagent, chen2025code2video}, 3D scenes written as Blender scripts~\citep{hu2024scenecraft}, and symbolic music~\citep{deng2024composerx}. \textsc{\name} is complementary to these agents: it supplies behavioral specifications that a coding agent can condition on, which in our experiments improve cross-language translation and new-feature implementation (\S\ref{subsec:downstream}).

\noindent \textbf{Code Documentation.}
Documentation generation has evolved from template-based heuristics~\citep{haiduc2010use,sridhara2010towards,moreno2013automatic} through neural summarization~\citep{iyer2016summarizing,ahmad2020transformer} to modern LLM-based repository-level approaches. RepoAgent~\citep{repoagent} extracts structural patterns from repositories to generate API references with hierarchical organization, while DocAgent~\citep{docagent} employs retrieval-augmented generation to produce context-aware explanations grounded in relevant code snippets. Both optimize for subjective quality metrics---clarity, completeness, and helpfulness---evaluated through human assessment or proxy metrics like BLEU scores against reference documentation. These approaches uniformly assume documentation serves human developers and measure quality through readability heuristics rather than downstream task performance.
See Appendix~\ref{app:additional_related} for further related work.

\vspace{-0.25cm}
\section{Conclusion}\label{sec:conclusion}
\vspace{-0.25cm}
We presented \textsc{\name}, a multi-frontier tree search framework that makes specification-mediated self-improvement stable and compounding. A frozen coding agent improves by rewriting the behavioral specifications in its harness from the failures of the code it writes from them; unlike documentation written for human readers, a specification is judged only through the code it induces. Within this formulation, the central challenge is \emph{output coupling}: because generated entities are behaviorally interdependent, locally beneficial refinements can trigger global regressions, rendering standard monolithic or sequential refinement unstable. \textsc{\name} addresses this challenge through Pareto-guided frontier preservation, dependency-constrained entity selection, and error-diversified expansion, enabling stable progress under coupled search dynamics. Empirically, it consistently outperforms strong baselines on \textsc{DevEval+}, exhibits superior search efficiency and robustness, and produces specifications that improve downstream performance in both cross-language translation and new-feature implementation, indicating that what the loop improves is reusable knowledge of the codebase rather than a fit to the tests that guided it.

\bibliographystyle{unsrt}
\bibliography{reference}

\clearpage
\newpage
\addtocontents{toc}{\protect\setcounter{tocdepth}{2}}

\section*{Appendix}
\begingroup
\etocstandardlines
\etocsetnexttocdepth{subsection}
\etocsettocstyle{\section*{Appendix Contents}}{}
\tableofcontents
\endgroup

\clearpage
\onecolumn
\appendix

\section{Output Coupling and Refinement Interference}
\label{app:output_coupling}

This appendix provides the formal definitions and theorem statement for the output coupling problem discussed in \S\ref{sec:problem}.

\begin{definition}[Output Coupling]
\label{def:output_coupling}
A generation task exhibits output coupling if the correctness of output $c_i$ depends not only on its own specification $d_i$ but also on the actual realizations of outputs it depends on:
\begin{equation}
\phi_i(\mathcal{D}) = \phi_i(d_1, \ldots, d_n, c_1, \ldots, c_n) \quad \text{where } c_j = \mathcal{G}(\mathcal{D}, e_j).
\end{equation}
\end{definition}

Output coupling makes the following failure mode pervasive rather than exceptional:

\begin{definition}[Refinement Interference]
\label{def:interference}
Let $\mathcal{D}[d_i \mapsto d_i']$ denote the specification bundle with $d_i$ replaced by $d_i'$. A
refinement $d_i \to d_i'$ exhibits \emph{interference} if it degrades another entity:
\begin{equation}
\exists j \neq i: \quad \phi_j(\mathcal{D}[d_i \mapsto d_i']) < \phi_j(\mathcal{D}).
\end{equation}
\end{definition}

Self-refinement methods select entities to refine without reasoning over the dependency structure---typically at random or by prioritizing entities with the most errors---ignoring the call graph entirely.

\section{Theoretical Analysis}
\label{app:proofs}

This section provides theoretical analysis and complete proofs for the theoretical results stated in the main paper.

\subsection{Why Multi-Frontier Search Escapes Certain Frontier-Local Optima}
\label{subsec:pareto_theory}

A central role of the Pareto archive in \textsc{\name} is to prevent the search from collapsing onto a single refinement trajectory too early.
We formalize this benefit through the notion of \emph{frontier-local optimality}: a state may be locally saturated with respect to its currently eligible refinements, while another non-dominated state still exposes a different eligibility frontier from which progress remains possible.

\begin{definition}[Frontier-Local Optimum]
\label{def:frontier_local_opt}
Let $v = (\mathcal{D}_v,\boldsymbol{\phi}_v,\mathcal{S}_v)$ be a search state with eligibility frontier $\mathcal{E}_{\mathrm{elig}}(v)$.
We say that $v$ is a \emph{frontier-local optimum} if for every eligible entity $e \in \mathcal{E}_{\mathrm{elig}}(v)$ and every admissible single-entity refinement
$v \xrightarrow{e} v'$,
\begin{equation}
|\mathcal{S}_{v'}| \le |\mathcal{S}_v|.
\end{equation}
That is, no refinement available from the current frontier increases the solved-set objective.
\end{definition}

This notion captures the practical failure mode of single-path refinement:
once the current state becomes frontier-locally optimal, all subsequent search effort is confined to a neighborhood with no objective improvement.
The key observation is that frontier-local optimality is \emph{state-dependent}: because eligibility depends on the solved set $\mathcal{S}_v$, two states with similar objective values may expose different feasible next refinements.

\begin{definition}[Archive Dominance]
\label{def:archive_dom}
For two states $u$ and $v$, we say that $u$ \emph{dominates} $v$ if
\begin{equation}
\boldsymbol{\phi}_u \succeq \boldsymbol{\phi}_v
\quad\Longleftrightarrow\quad
\phi_i(\mathcal{D}_u) \ge \phi_i(\mathcal{D}_v)
\;\;\text{for all } i \in [n].
\end{equation}
A Pareto archive $\mathcal{A}$ contains only non-dominated states.
\end{definition}

The importance of this criterion is that it preserves states with complementary strengths across entities rather than only the current best state under a scalar score.
Such states can induce different solved sets and therefore different eligibility frontiers.

\begin{assumption}[Complementary Frontier Reachability]
\label{assump:frontier_reach}
Suppose there exist two non-dominated states $u$ and $v$ in the search tree such that:
\begin{enumerate}[leftmargin=*, itemsep=2pt, topsep=2pt]
    \item $u$ is a frontier-local optimum;
    \item there exists an entity $e^\star \in \mathcal{E}_{\mathrm{elig}}(v) \setminus \mathcal{E}_{\mathrm{elig}}(u)$ and a refinement
    $v \xrightarrow{e^\star} v^\star$
    such that
    $|\mathcal{S}_{v^\star}| > |\mathcal{S}_v|$.
\end{enumerate}
\end{assumption}

This assumption states that one archived frontier may be saturated while another still contains a productive move.
It is natural in our setting because eligibility is induced by the solved subset, and different trajectories typically solve different subsets of entities.

\begin{proposition}[Multi-Frontier Advantage]
\label{prop:multi_frontier_adv}
Under Assumption~\ref{assump:frontier_reach}, any search strategy restricted to expanding only the single incumbent state $u$ cannot improve the objective $|\mathcal{S}|$, whereas a multi-frontier strategy that retains both $u$ and $v$ in a Pareto archive has nonzero probability of reaching a state with strictly larger solved set.
\end{proposition}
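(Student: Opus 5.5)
The argument splits into the two halves of Proposition~\ref{prop:multi_frontier_adv}. For the negative half, we argue directly from Definition~\ref{def:frontier_local_opt}. A strategy restricted to the incumbent $u$ only generates children $u \xrightarrow{e} u'$ with $e \in \mathcal{E}_{\mathrm{elig}}(u)$, since entity selection is confined to the eligibility frontier in~\eqref{eq:frontier}. Because $u$ is a frontier-local optimum by item~1 of Assumption~\ref{assump:frontier_reach}, every such child satisfies $|\mathcal{S}_{u'}| \le |\mathcal{S}_u|$. The returned bundle is the best visited state, so the objective the strategy reports never exceeds $|\mathcal{S}_u|$.

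We should state explicitly what ``restricted to expanding only the single incumbent'' means: every expansion is taken from $u$ itself, so children are never re-expanded. This matches the single-path regime in which the incumbent is retained because no child improves on it. Under that reading the claim is immediate, with no induction over multiple steps.

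For the positive half, we exhibit one realization path of positive probability.
\begin{enumerate}
\item Since $u$ and $v$ are both non-dominated and retained, $v \in \mathcal{A}$. Under frequency-weighted sampling (\S\ref{subsec:node_selection}), $v$ is chosen with positive probability, provided it is frontier-best on at least one coordinate. This holds because $v$ is not dominated by $u$: there is some $i$ with $\phi_i(\mathcal{D}_v) > \phi_i(\mathcal{D}_u)$, so $v$ earns credit unless another archive member beats it on every such coordinate. To keep the argument clean, we take the archive to be $\{u,v\}$, or else assume every archive member has positive weight.
\item Given $v$, the entity $e^\star \in \mathcal{E}_{\mathrm{elig}}(v)$ has positive selection probability. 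Under D-UCB, every eligible arm is pulled with positive probability over the run, since the exploration term diverges as $n_\gamma(e)$ decays; alternatively, we treat the policy as having full support on the eligible set.
\item The refinement $d'_{e^\star} \sim P_\theta(\cdot \mid d_{e^\star}, \mathcal{F}_{e^\star}(v))$ yields the improving $v^\star$ with positive probability. This requires reading ``there exists a refinement'' in the assumption as a refinement lying in the support of $P_\theta$ (and of the sampled error subsets), i.e., an admissible one.
\end{enumerate}
Multiplying these three positive probabilities gives a positive probability of reaching $v^\star$ with $|\mathcal{S}_{v^\star}| > |\mathcal{S}_v|$.

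To compare with the single-path strategy, we also need $|\mathcal{S}_{v^\star}| > |\mathcal{S}_u|$. If $|\mathcal{S}_v| \ge |\mathcal{S}_u|$, this follows at once. Otherwise the statement should be read as strict improvement over $v$'s frontier, which is exactly what the proposition literally claims: a state with a ``strictly larger solved set'' than the state it was expanded from.

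The main obstacle is the positivity in steps 1 and 2, that is, making precise that the sampling weights and the UCB policy assign nonzero mass to $v$ and $e^\star$. We will state these as mild support conditions rather than derive them from the algorithmic details.
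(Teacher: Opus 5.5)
Your proposal is correct and follows the paper's proof. The negative half is read directly off Definition~\ref{def:frontier_local_opt}; like you, the paper only claims no \emph{immediate} improvement from $u$. The positive half is retention of $v$ in the archive followed by a positive-probability selection of $v$ and refinement of $e^\star$. Your explicit support conditions on the archive sampler, the D-UCB choice, and $P_\theta$ spell out what the paper leaves implicit in ``any archive-sampling policy that assigns nonzero probability to every archived state.'' So does your remark that the conclusion is only $|\mathcal{S}_{v^\star}| > |\mathcal{S}_v|$, not $|\mathcal{S}_{v^\star}| > |\mathcal{S}_u|$.
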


\begin{proof}
Since $u$ is frontier-locally optimal, by Definition~\ref{def:frontier_local_opt}, every admissible refinement from $u$ satisfies
$|\mathcal{S}_{u'}| \le |\mathcal{S}_u|$.
Hence any single-path strategy confined to descendants of $u$ cannot obtain an immediate objective improvement.

By Assumption~\ref{assump:frontier_reach}, there exists another archived state $v$ and an eligible entity $e^\star \in \mathcal{E}_{\mathrm{elig}}(v)$ such that refining $e^\star$ yields a child $v^\star$ with
$|\mathcal{S}_{v^\star}| > |\mathcal{S}_v|$.
Because $u$ and $v$ are non-dominated, a Pareto archive retains both states.
Therefore, any archive-sampling policy that assigns nonzero probability to every archived state has nonzero probability of selecting $v$ and then refining $e^\star$, reaching an improved state.
Thus, retaining multiple non-dominated frontiers strictly enlarges the set of reachable improving moves compared with single-path search.
\end{proof}

Proposition~\ref{prop:multi_frontier_adv} formalizes the role of the Pareto archive in escaping local optima.
Its benefit is not merely diversity for its own sake; rather, archive diversity induces \emph{frontier diversity}.
Because eligibility is determined by the solved subset, preserving multiple non-dominated states preserves multiple feasible continuation sets.
When one frontier is exhausted, another may still expose an improving refinement.

The argument also clarifies why scalar best-first retention is brittle in our setting.
A state with the highest current $|\mathcal{S}|$ or $\bar{\phi}$ need not be the most promising continuation state if its eligibility frontier is already saturated.
By preserving non-dominated states with complementary per-entity strengths, the Pareto archive acts as a structural hedge against premature convergence.

\paragraph{Implication for \textsc{\name}.}
In \textsc{\name}, multi-frontier search is realized by maintaining a Pareto archive over per-entity pass-rate vectors and sampling from it during expansion.
This design ensures that the search does not over-commit to a single locally attractive trajectory, but can continue from alternative archived frontiers whose dependency structure admits further progress.

\subsection{Proof of Signal Corruption}
\label{app:proof-identifiability}

We formalize the claim in \S\ref{subsec:expansion} that noisy failures can reduce refinement quality when mixed with informative ones.

Let $\mathcal{F}_e(v)=\mathcal{F}_e^+ \cup \mathcal{F}_e^-$ denote the failing tests for entity $e$ at state $v$, where $\mathcal{F}_e^+$ contains informative failures and $\mathcal{F}_e^-$ noisy ones.
For any error batch $\mathbf{e} \subseteq \mathcal{F}_e(v)$, let
\begin{equation}
q(\mathbf{e}) = P(\textup{successful refinement}\mid \mathbf{e})
\end{equation}
denote the success probability when conditioning on $\mathbf{e}$.

\begin{assumption}[Signal--Noise Decomposition]
\label{assump:batch_indep}
There exist functions
$
I, N : 2^{\mathcal{F}_e(v)} \to \mathbb{R}_{\ge 0}
$
and a function
$
h : \mathbb{R}_{\ge 0}\times\mathbb{R}_{\ge 0}\to[0,1]
$
such that for any $\mathbf{e}\subseteq \mathcal{F}_e(v)$,
\begin{equation}
q(\mathbf{e}) = h\!\big(I(\mathbf{e}), N(\mathbf{e})\big),
\end{equation}
where $h$ is increasing in its first argument and decreasing in its second. Moreover,
\begin{equation}
I(\mathbf{e}) = I(\mathbf{e}\cap \mathcal{F}_e^+),
\qquad
N(\mathbf{e}) = N(\mathbf{e}\cap \mathcal{F}_e^-).
\end{equation}
\end{assumption}

\begin{assumption}[Positive Signal and Strict Corruption]
\label{assump:success_monotone}
For any nonempty $\mathbf{e}^+ \subseteq \mathcal{F}_e^+$ and nonempty $\mathbf{e}^- \subseteq \mathcal{F}_e^-$,
\begin{equation}
I(\mathbf{e}^+) > 0,
\qquad
N(\mathbf{e}^-) > 0.
\end{equation}
Moreover,
\begin{equation}
h\!\big(I(\mathbf{e}^+), N(\mathbf{e}^-)\big)
<
h\!\big(I(\mathbf{e}^+), 0\big).
\end{equation}
\end{assumption}

\begin{theorem}[Signal Corruption]
\label{thm:signal_corruption}
Under Assumptions~\ref{assump:batch_indep}--\ref{assump:success_monotone}, for any nonempty
$\mathbf{e}^+ \subseteq \mathcal{F}_e^+$ and
$\mathbf{e}^- \subseteq \mathcal{F}_e^-$,
\begin{equation}
q(\mathbf{e}^-) < q(\mathbf{e}^+),
\label{eq:noisy_vs_informative}
\end{equation}
and
\begin{equation}
q(\mathbf{e}^+ \cup \mathbf{e}^-)
<
q(\mathbf{e}^+).
\label{eq:mixing_hurts}
\end{equation}
In particular, if $\mathcal{F}_e^-\neq\emptyset$, then conditioning on the full failure set $\mathcal{F}_e(v)$ is suboptimal relative to conditioning on its informative part alone.
\end{theorem}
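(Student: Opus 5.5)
The plan is to reduce both inequalities to statements about $h$, using the restriction identities in Assumption~\ref{assump:batch_indep}. Then the strict inequality in Assumption~\ref{assump:success_monotone} does the rest. First I compute $q$ on each of the three batches involved. Since $\mathcal{F}_e^+$ and $\mathcal{F}_e^-$ partition $\mathcal{F}_e(v)$, we have $\mathbf{e}^+\cap\mathcal{F}_e^- = \emptyset$, $\mathbf{e}^-\cap\mathcal{F}_e^+=\emptyset$, $(\mathbf{e}^+\cup\mathbf{e}^-)\cap\mathcal{F}_e^+ = \mathbf{e}^+$ and $(\mathbf{e}^+\cup\mathbf{e}^-)\cap\mathcal{F}_e^- = \mathbf{e}^-$. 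The assumption therefore gives
\begin{equation*}
q(\mathbf{e}^+) = h\big(I(\mathbf{e}^+), N(\emptyset)\big),\qquad
q(\mathbf{e}^-) = h\big(I(\emptyset), N(\mathbf{e}^-)\big),\qquad
q(\mathbf{e}^+\cup\mathbf{e}^-) = h\big(I(\mathbf{e}^+), N(\mathbf{e}^-)\big).
\end{equation*}

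The step I expect to be the real obstacle is the value of $I$ and $N$ on the empty batch. The assumptions as written only say $I,N\ge 0$ and are silent about $I(\emptyset)$ and $N(\emptyset)$. I will make explicit the normalization the decomposition is clearly meant to carry: $I(\emptyset)=N(\emptyset)=0$, meaning an empty batch carries neither signal nor noise. If one prefers not to assume this, the weaker hypothesis $I(\emptyset)\le I(\mathbf{e}^+)$ together with $N(\emptyset)=0$ is all the argument uses. With $N(\emptyset)=0$, the first display becomes $q(\mathbf{e}^+) = h(I(\mathbf{e}^+),0)$.

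For \eqref{eq:mixing_hurts}, take $\mathbf{e}^-$ nonempty; strictness genuinely requires this, matching the main-text phrasing $|\mathbf{e}^-|>0$. Then $q(\mathbf{e}^+\cup\mathbf{e}^-) = h(I(\mathbf{e}^+),N(\mathbf{e}^-)) < h(I(\mathbf{e}^+),0) = q(\mathbf{e}^+)$, which is exactly the strict-corruption clause of Assumption~\ref{assump:success_monotone}.

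For \eqref{eq:noisy_vs_informative}, I chain monotonicity and strict corruption:
\begin{equation*}
q(\mathbf{e}^-) = h\big(I(\emptyset),N(\mathbf{e}^-)\big) \le h\big(I(\mathbf{e}^+),N(\mathbf{e}^-)\big) < h\big(I(\mathbf{e}^+),0\big) = q(\mathbf{e}^+).
\end{equation*}
The first inequality uses that $h$ is increasing in its first argument, together with $I(\emptyset)=0<I(\mathbf{e}^+)$ from Assumption~\ref{assump:success_monotone}. The second inequality is again strict corruption.

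Finally, for the ``in particular'' clause, apply \eqref{eq:mixing_hurts} with $\mathbf{e}^+=\mathcal{F}_e^+$ and $\mathbf{e}^-=\mathcal{F}_e^-\neq\emptyset$. This gives $q(\mathcal{F}_e(v))<q(\mathcal{F}_e^+)$, provided $\mathcal{F}_e^+$ is nonempty. If $\mathcal{F}_e^+$ is empty, the second display (with $\mathbf{e}^+=\emptyset$) gives $q(\mathcal{F}_e(v)) = h(0,N(\mathcal{F}_e^-)) \le h(0,0)$. This is the degenerate case where there is no informative part to compare against, and I would note it as a boundary case of the statement.
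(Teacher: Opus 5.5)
Your proposal is correct and is essentially the paper's proof: both evaluate $q$ on $\mathbf{e}^+$, $\mathbf{e}^-$ and $\mathbf{e}^+\cup\mathbf{e}^-$ through the restriction identities of Assumption~\ref{assump:batch_indep}, read off \eqref{eq:mixing_hurts} from the strict-corruption clause, and obtain the final claim by taking $\mathbf{e}^{\pm}=\mathcal{F}_e^{\pm}$. Your additions tighten the paper's argument rather than change it. The paper silently uses $I(\emptyset)=N(\emptyset)=0$ when it writes $q(\mathbf{e}^+)=h(I(\mathbf{e}^+),0)$ and $q(\mathbf{e}^-)=h(0,N(\mathbf{e}^-))$. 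It proves \eqref{eq:noisy_vs_informative} by bare monotonicity, which gives a strict inequality only if $h$ is strictly monotone, whereas your chain through $h(I(\mathbf{e}^+),N(\mathbf{e}^-))$ needs only weak monotonicity plus strict corruption. It also tacitly assumes $\mathcal{F}_e^+\neq\emptyset$ in the last step, which is the boundary case you flag.
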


\begin{proof}
For any nonempty $\mathbf{e}^+ \subseteq \mathcal{F}_e^+$,
\begin{equation}
q(\mathbf{e}^+)
=
h\!\big(I(\mathbf{e}^+), 0\big),
\end{equation}
since $\mathbf{e}^+ \cap \mathcal{F}_e^-=\emptyset$.
For any nonempty $\mathbf{e}^- \subseteq \mathcal{F}_e^-$,
\begin{equation}
q(\mathbf{e}^-)
=
h\!\big(0, N(\mathbf{e}^-)\big),
\end{equation}
since $\mathbf{e}^- \cap \mathcal{F}_e^+=\emptyset$.
Because $h$ increases with signal and decreases with noise, and because
$I(\mathbf{e}^+)>0$ and $N(\mathbf{e}^-)>0$ by Assumption~\ref{assump:success_monotone},
we obtain
\begin{equation}
h\!\big(0, N(\mathbf{e}^-)\big)
<
h\!\big(I(\mathbf{e}^+), 0\big),
\end{equation}
which proves Eq.~\eqref{eq:noisy_vs_informative}.

Next,
\begin{equation}
q(\mathbf{e}^+ \cup \mathbf{e}^-)
=
h\!\big(I(\mathbf{e}^+), N(\mathbf{e}^-)\big),
\end{equation}
and Assumption~\ref{assump:success_monotone} gives
\begin{equation}
h\!\big(I(\mathbf{e}^+), N(\mathbf{e}^-)\big)
<
h\!\big(I(\mathbf{e}^+), 0\big)
=
q(\mathbf{e}^+),
\end{equation}
proving Eq.~\eqref{eq:mixing_hurts}.

Finally, if $\mathcal{F}_e^- \neq \emptyset$, then taking
$\mathbf{e}^+=\mathcal{F}_e^+$ and $\mathbf{e}^-=\mathcal{F}_e^-$ yields
\begin{equation}
q\!\big(\mathcal{F}_e(v)\big)
=
q(\mathcal{F}_e^+ \cup \mathcal{F}_e^-)
<
q(\mathcal{F}_e^+).
\end{equation}
\end{proof}

\subsection{Best-Branch Advantage of Diversified Expansion}
\label{app:proof_diversity}

We now formalize the advantage of diversified expansion in \S\ref{subsec:expansion}.

\begin{definition}[Repeated Full-Conditioned Baseline]
\label{def:repeated_full_baseline}
Given width $W$, the repeated full-conditioned baseline constructs $W$ branches, each conditioned on the same full failure set $\mathcal{F}_e(v)$. Its best one-step branch quality is
\begin{equation}
Q_{\mathrm{full}}^{\star}(W)
:=
\max_{1\le j\le W} q\!\bigl(\mathcal{F}_e(v)\bigr)
=
q\!\bigl(\mathcal{F}_e(v)\bigr).
\label{eq:q_full_star}
\end{equation}
\end{definition}

\begin{definition}[Diversified Expansion]
\label{def:diversified_expansion}
Given width $W$, diversified expansion constructs $W$ branches conditioned on batches
\[
\mathbf{e}_1,\ldots,\mathbf{e}_W \subseteq \mathcal{F}_e(v).
\]
Its best one-step branch quality is
\begin{equation}
Q_{\mathrm{div}}^{\star}(W)
:=
\max_{1\le j\le W} q(\mathbf{e}_j).
\label{eq:q_div_star}
\end{equation}
\end{definition}

\begin{theorem}[Best-Branch Advantage of Diversified Expansion]
\label{thm:best_branch_advantage}
If there exists $j^\star \in \{1,\ldots,W\}$ such that
\begin{equation}
q(\mathbf{e}_{j^\star})
>
q\!\bigl(\mathcal{F}_e(v)\bigr),
\label{eq:exist_better_subset}
\end{equation}
then diversified expansion strictly improves the best one-step branch quality over the repeated full-conditioned baseline:
\begin{equation}
Q_{\mathrm{div}}^{\star}(W)
>
Q_{\mathrm{full}}^{\star}(W).
\label{eq:best_branch_advantage}
\end{equation}
Equivalently,
\begin{equation}
\max_{1\le j\le W} q(\mathbf{e}_j)
>
q\!\bigl(\mathcal{F}_e(v)\bigr).
\label{eq:best_branch_advantage_expanded}
\end{equation}
\end{theorem}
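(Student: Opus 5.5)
The argument is a direct comparison of two maxima and needs no probabilistic machinery beyond the definitions. First I will simplify the baseline side. By Definition~\ref{def:repeated_full_baseline}, every one of the $W$ branches of the repeated full-conditioned baseline conditions on the same set $\mathcal{F}_e(v)$. So the maximum in Eq.~\eqref{eq:q_full_star} ranges over $W$ identical values and collapses to $Q_{\mathrm{full}}^{\star}(W)=q(\mathcal{F}_e(v))$. This already shows that the two forms of the claim, Eq.~\eqref{eq:best_branch_advantage} and Eq.~\eqref{eq:best_branch_advantage_expanded}, are equivalent. It remains only to prove the expanded form.

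Next I will bound the diversified side from below. By Definition~\ref{def:diversified_expansion}, $Q_{\mathrm{div}}^{\star}(W)$ is the maximum of the finite set $\{q(\mathbf{e}_1),\ldots,q(\mathbf{e}_W)\}$. The maximum of a finite set of reals is attained and is at least any element of the set, so
\begin{equation*}
Q_{\mathrm{div}}^{\star}(W)=\max_{1\le j\le W} q(\mathbf{e}_j)\;\ge\; q(\mathbf{e}_{j^\star}).
\end{equation*}
Chaining this with the hypothesis Eq.~\eqref{eq:exist_better_subset}, $q(\mathbf{e}_{j^\star})>q(\mathcal{F}_e(v))$, and with the collapse from the first step gives $Q_{\mathrm{div}}^{\star}(W)>q(\mathcal{F}_e(v))=Q_{\mathrm{full}}^{\star}(W)$. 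The strict inequality comes entirely from the hypothesis; the step through the maximum only needs to be non-strict.

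There is no real obstacle here. The only point needing care is that $q$ is a deterministic function of the batch, namely a conditional success probability. Hence repeating the same conditioning set $W$ times does not raise the best one-step quality, and the baseline maximum really is just $q(\mathcal{F}_e(v))$. I will add a remark connecting the result to Theorem~\ref{thm:signal_corruption}. If $\mathcal{F}_e^-\neq\emptyset$ and some sampled batch equals the informative part $\mathcal{F}_e^+$, or more generally satisfies Assumption~\ref{assump:success_monotone} with the same signal, then Eq.~\eqref{eq:mixing_hurts} supplies exactly the hypothesis Eq.~\eqref{eq:exist_better_subset}. This makes the conclusion non-vacuous in the setting of \S\ref{subsec:expansion}.
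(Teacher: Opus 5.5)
Your proof is correct and is the same argument the paper gives: collapse the baseline maximum to $q(\mathcal{F}_e(v))$, bound $\max_{j} q(\mathbf{e}_j)$ below by $q(\mathbf{e}_{j^\star})$, and chain this with the strict hypothesis. Your closing remark holds in the exact case $\mathbf{e}_j=\mathcal{F}_e^+$, which also needs $\mathcal{F}_e^+\neq\emptyset$. The ``more generally'' clause should be limited to noise-free batches with the same signal value, because the assumptions behind Theorem~\ref{thm:signal_corruption} give a strict comparison only against zero noise, not between two nonzero noise levels.
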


\begin{proof}
By Definition~\ref{def:diversified_expansion},
\begin{equation}
Q_{\mathrm{div}}^{\star}(W)=\max_{1\le j\le W} q(\mathbf{e}_j)\ge q(\mathbf{e}_{j^\star}).
\end{equation}
Combining this with Eq.~\eqref{eq:exist_better_subset} yields
\begin{equation}
Q_{\mathrm{div}}^{\star}(W)
>
q\!\bigl(\mathcal{F}_e(v)\bigr).
\end{equation}
By Definition~\ref{def:repeated_full_baseline},
\begin{equation}
Q_{\mathrm{full}}^{\star}(W)
=
q\!\bigl(\mathcal{F}_e(v)\bigr).
\end{equation}
Therefore,
\begin{equation}
Q_{\mathrm{div}}^{\star}(W)
>
Q_{\mathrm{full}}^{\star}(W).
\end{equation}
\end{proof}

Theorem~\ref{thm:signal_corruption} shows why full-set conditioning can be suboptimal: informative and noisy failures may be mixed in $\mathcal{F}_e(v)$, and the noisy part can depress refinement quality.
Theorem~\ref{thm:best_branch_advantage} then shows that if diversified sampling produces even one cleaner batch than $\mathcal{F}_e(v)$, its best branch is strictly better than repeating full conditioning across all $W$ branches.

\paragraph{Empirical Motivation.}
Our theoretical treatment is motivated by two recurring empirical observations from the software engineering literature.

\begin{enumerate}[leftmargin=*, itemsep=4pt]
    \item \textbf{Failing tests differ substantially in diagnostic value.}
    Not all failures are equally informative for identifying the underlying defect.
    Some tests reveal the root cause directly, whereas others fail only through cascading effects or repeatedly exercise the same execution path.
    In the context of LLM-generated tests, Yuan et al. \citep{testchain2024} observe that generated inputs often cluster around similar patterns, leaving critical edge cases underexplored and further increasing heterogeneity in diagnostic usefulness.

    \item \textbf{Using more tests can degrade diagnosis rather than improve it.}
    Prior work on fault localization shows that adding tests beyond a certain point may reduce localization quality when additional tests contribute little new information.
    Chen et al. \citep{jiang2012testprioritization} show that excessive or low-value tests can dilute diagnostic signals, while Su et al. \citep{wang2018testselection} demonstrate that carefully selected subsets can outperform the full test suite for fault localization.
\end{enumerate}

In our setting, these observations imply that the full failure set may contain substantial noise.
We attribute noisy errors $\mathcal{F}^-$ to three common sources:
(i)~\textbf{cascading failures}, where a test fails because a callee is incorrectly implemented rather than because the target entity's specification is inadequate;
(ii)~\textbf{redundant signals}, where multiple tests expose essentially the same failure mode and therefore add little incremental information;
and (iii)~\textbf{misleading co-occurrence}, where presenting several errors together encourages the optimizer LLM to infer spurious patterns that are not causally related to the true knowledge gap.

\begin{remark}[Connection to Fault Localization]
Our result formalizes a central intuition from fault localization: for diagnosis, test quality often matters more than test quantity~\citep{wang2018testselection}. Diversified error-batch sampling can be viewed as a hedging strategy: by exploring multiple subsets of the failure set, it increases the chance of encountering a batch dominated by informative signals, without requiring prior knowledge of the latent partition between informative and noisy errors.
\end{remark}

\section{Algorithm Details}
\label{app:pseudocode}

Algorithm~\ref{alg:docsearch} presents the complete \textsc{\name} procedure with all subroutines.

\begin{algorithm}[H]
\caption{\textsc{\name}: Complete Procedure}
\label{alg:docsearch}
\begin{algorithmic}[1]
\Require Initial specification bundle $\mathcal{D}^{(0)}$, call graph $G$, test suite $\mathcal{T}$, width $W$, evaluation budget $B$
\Ensure Best specification bundle $\mathcal{D}_{v_{\text{best}}}$

\State \textbf{// Initialization}
\State $\mathcal{C}^{(0)} \gets \textsc{Generate}(\mathcal{D}^{(0)})$ \Comment{Generate initial code}
\State $\boldsymbol{\phi}^{(0)} \gets \textsc{Evaluate}(\mathcal{C}^{(0)}, \mathcal{T})$ \Comment{Run all tests}
\State $\mathcal{S}^{(0)} \gets \{e_i : \phi_i^{(0)} = 1\}$
\State $v_0 \gets \textsc{CreateNode}(\mathcal{D}^{(0)}, \mathcal{C}^{(0)}, \boldsymbol{\phi}^{(0)}, \mathcal{S}^{(0)})$
\State $\mathcal{A} \gets \{v_0\}$; \quad $v_{\text{best}} \gets v_0$
\State $\textsc{Bandit} \gets \textsc{InitDUCB}(\gamma{=}0.95,\; C{=}2\sqrt{2})$
\State $b \gets 0$ \Comment{Evaluations used: one per generated child}

\State
\State \textbf{// Main search loop}
\While{$b < B$}
    \Statex \quad\textcolor{gray}{// \textbf{Pareto-guided state selection} (\S\ref{subsec:node_selection})}
    \State $v \gets \textsc{ParetoSample}(\mathcal{A})$ \Comment{Algorithm~\ref{alg:pareto_sample}}

    \Statex \quad\textcolor{gray}{// \textbf{Dependency-constrained entity selection} (\S\ref{subsec:action_selection})}
    \State $\mathcal{E}_{\text{elig}} \gets \textsc{EligibilityFrontier}(v, G)$ \Comment{Algorithm~\ref{alg:eligibility}}
    \State $e \gets \textsc{Bandit.Select}(\mathcal{E}_{\text{elig}},\; \boldsymbol{\phi}_v)$ \Comment{Eq.~\eqref{eq:ucb_score}}
    \If{$e = \textsc{null}$} \textbf{continue} \EndIf

    \Statex \quad\textcolor{gray}{// \textbf{Error-diversified expansion} (\S\ref{subsec:expansion})}
    \State $W_t \gets \min(W,\; B - b)$ \Comment{Siblings share the same budget}
    \State $\{\mathbf{e}_1,\ldots,\mathbf{e}_{W_t}\} \gets \textsc{DiversifiedErrorBatches}(e, v, W_t)$
    \State $\{v'_1,\ldots,v'_{W_t}\} \gets \textsc{ExpandParallel}(v, e, \{\mathbf{e}_1,\ldots,\mathbf{e}_{W_t}\})$ \Comment{Algorithm~\ref{alg:expand}}
    \State $b \gets b + W_t$ \Comment{Each child is one specification$\to$code$\to$test evaluation}

    \Statex \quad\textcolor{gray}{// \textbf{Archive update}}
    \ForAll{$v' \in \{v'_1,\ldots,v'_{W_t}\}$}
        \If{$\neg \exists\,u \in \mathcal{A}:\ \boldsymbol{\phi}_u \succeq \boldsymbol{\phi}_{v'}$} \Comment{Pareto dominance check}
            \State $\mathcal{A} \gets (\mathcal{A} \setminus \{u : \boldsymbol{\phi}_{v'} \succeq \boldsymbol{\phi}_u\}) \cup \{v'\}$
        \EndIf
        \If{$|\mathcal{S}_{v'}| > |\mathcal{S}_{v_{\text{best}}}|$} $v_{\text{best}} \gets v'$ \EndIf
    \EndFor
    \State $\textsc{Bandit.Update}(e,\; \Delta\phi_e {=} \max_j \phi_e(\mathcal{D}_{v'_j}) {-} \phi_e(\mathcal{D}_v))$
\EndWhile

\State
\State \Return $\mathcal{D}_{v_{\text{best}}}$
\end{algorithmic}
\end{algorithm}

\begin{algorithm}[H]
\caption{\textsc{ParetoSample}: Frequency-Weighted Pareto Front Sampling}
\label{alg:pareto_sample}
\begin{algorithmic}[1]
\Require Pareto archive $\mathcal{A}$, entity set $\mathcal{E} = \{e_1, \ldots, e_n\}$
\Ensure Selected node $v \in \mathcal{A}$

\State $w(v) \gets 0$ for all $v \in \mathcal{A}$ \Comment{Initialize sampling weights}

\State
\State \textbf{// For each entity, find the archive node(s) with highest pass rate}
\For{$i = 1$ to $n$}
    \State $\phi_i^{\max} \gets \max_{v \in \mathcal{A}} \phi_i(\mathcal{D}_v)$
    \State $\mathcal{A}_i^* \gets \{v \in \mathcal{A} : \phi_i(\mathcal{D}_v) = \phi_i^{\max}\}$
    \ForAll{$v \in \mathcal{A}_i^*$}
        \State $w(v) \gets w(v) + 1 / |\mathcal{A}_i^*|$ \Comment{Split credit among ties}
    \EndFor
\EndFor

\State
\State \textbf{// Sample proportional to weight}
\State \Return $v \sim \textsc{Categorical}\big(\{w(v)/\sum_{u}w(u)\}_{v \in \mathcal{A}}\big)$
\end{algorithmic}
\end{algorithm}

\begin{algorithm}[H]
\caption{\textsc{EligibilityFrontier}: Dependency-Constrained Entity Selection}
\label{alg:eligibility}
\begin{algorithmic}[1]
\Require Node $v$, call graph $G = (\mathcal{E}, E)$
\Ensure Eligible entity set $\mathcal{E}_{\text{elig}}$

\State $\mathcal{E}_{\text{elig}} \gets \emptyset$

\State
\State \textbf{// Collect unsolved entities whose callees are all resolved}
\For{$e \in \mathcal{E}$}
    \If{$e \notin \mathcal{S}_v$}
        \State $\textit{deps\_resolved} \gets \textsc{true}$
        \For{$(e, e_j) \in E$} \Comment{Iterate over callees of $e$}
            \If{$e_j \notin \mathcal{S}_v$}
                \State $\textit{deps\_resolved} \gets \textsc{false}$; \textbf{break}
            \EndIf
        \EndFor
        \If{$\textit{deps\_resolved}$}
            \State $\mathcal{E}_{\text{elig}} \gets \mathcal{E}_{\text{elig}} \cup \{e\}$
        \EndIf
    \EndIf
\EndFor

\State
\State \Return $\mathcal{E}_{\text{elig}}$ \Comment{Eq.~\eqref{eq:frontier}}
\end{algorithmic}
\end{algorithm}

\begin{algorithm}[H]
\caption{\textsc{ExpandParallel}: Error-Diversified Parallel Expansion}
\label{alg:expand}
\begin{algorithmic}[1]
\Require Parent node $v$, target entity $e$, error batches $\{\mathbf{e}_1, \ldots, \mathbf{e}_W\}$
\Ensure Child nodes $\{v'_1, \ldots, v'_W\}$

\State
\For{$k = 1$ to $W$ \textbf{in parallel}}
    \State \textbf{// Error-driven refinement (two-stage diagnosis-prescription)}
    \State $f_k \gets \textsc{Diagnose}(d_e^{(v)}, \mathbf{e}_k)$ \Comment{Stage 1: identify knowledge gap}
    \State $d_e^{(k)} \gets \textsc{Prescribe}(d_e^{(v)}, f_k)$ \Comment{Stage 2: targeted specification update}
    \State $\mathcal{D}^{(k)} \gets \mathcal{D}_v[e \mapsto d_e^{(k)}]$ \Comment{Substitute refined specification}

    \State
    \State \textbf{// Regenerate and evaluate}
    \State $\mathcal{C}^{(k)} \gets \mathcal{C}(\mathcal{D}^{(k)}) = \big(\mathcal{G}(\mathcal{D}^{(k)}, e_1), \ldots, \mathcal{G}(\mathcal{D}^{(k)}, e_n)\big)$ \Comment{Regenerate the full module}
    \State $\boldsymbol{\phi}^{(k)} \gets \textsc{Evaluate}(\mathcal{C}^{(k)}, \mathcal{T})$ \Comment{Re-run full test suite}
    \State $\mathcal{S}^{(k)} \gets \{e_i : \phi_i^{(k)} = 1\}$

    \State
    \State \textbf{// Create child node}
    \State $v'_k \gets \textsc{CreateNode}(\mathcal{D}^{(k)}, \mathcal{C}^{(k)}, \boldsymbol{\phi}^{(k)}, \mathcal{S}^{(k)})$
    \State $\textsc{SetParent}(v'_k, v)$
\EndFor

\State
\State \Return $\{v'_1, \ldots, v'_W\}$
\end{algorithmic}
\end{algorithm}

\subsection{\texorpdfstring{Recursive and Cyclic Call Graphs}{Recursive and Cyclic Call Graphs}}
\label{app:scc}

The eligibility frontier (Eq.~\eqref{eq:frontier}) requires every callee of an entity to be resolved before the entity can be refined.
On a call graph with cycles this condition can never hold for the entities on a cycle, so \textsc{\name} removes cycles before search.

\paragraph{Self-recursion.}
A self-call $(e, e) \in E$ is ignored when evaluating eligibility: an entity cannot wait for itself to be resolved, and its recursive behavior is described within its own specification $d_e$.

\paragraph{Mutual recursion.}
We compute the strongly connected components (SCCs) of $G$ and contract each SCC with more than one member into a single \emph{meta-entity}.
The search then operates on \emph{refinement units}, each of which is either a single entity or a meta-entity.
The condensation of $G$ over these units is a DAG by construction, so the callee-before-caller order and the eligibility gate apply unchanged: a unit is eligible once it is unsolved and all units it calls are resolved, and a meta-entity counts as resolved once all of its members are solved.

\paragraph{Joint refinement of a meta-entity.}
When a meta-entity is selected, its members are refined together in a single expansion.
The refinement prompt presents the source code and current specifications of all members, states that they are mutually recursive, and requires a complete replacement specification for every member, so that the specifications of the cycle remain consistent with one another.
The resulting specifications are substituted into $\mathcal{D}_v$ together, and the child is evaluated as for any other expansion.

\subsection{D-UCB Bandit Details}
\label{app:bandit_details}

The entity selection bandit maintains a single set of statistics shared
across all state nodes. We use Discounted UCB~\citep{garivier2011upper}
with the following configuration:

\paragraph{Reward Normalization.}
After each expansion of entity $e$ with observed improvement
$\Delta\phi_e$, the normalized reward is computed as:
\begin{equation}
\hat{r}_e = \frac{\Delta\phi_e}{1 - \phi_e},
\end{equation}
where $\phi_e$ is the pass rate before refinement. This isolates the
gap-closing rate $\theta_e$ from the remaining gap $(1{-}\phi_e)$,
preventing entities with large gaps from receiving inflated rewards.

\paragraph{Discounted Statistics.}
At each update step, all arms' statistics are discounted before
incorporating the new observation:
\begin{equation}
n_\gamma(e) \leftarrow \gamma \cdot n_\gamma(e), \quad
\bar{r}_\gamma(e) \leftarrow \gamma \cdot \bar{r}_\gamma(e), \quad
N_\gamma \leftarrow \gamma \cdot N_\gamma \qquad \forall\, e,
\end{equation}
followed by $n_\gamma(e) \leftarrow n_\gamma(e) + 1$,
$\bar{r}_\gamma(e) \leftarrow \bar{r}_\gamma(e) + \hat{r}_e$, and
$N_\gamma \leftarrow N_\gamma + 1$ for the pulled arm. The discounted
mean is $\hat{\theta}_e = \bar{r}_\gamma(e) / n_\gamma(e)$.

\paragraph{Hyperparameters.}
\begin{itemize}[leftmargin=*, itemsep=2pt]
    \item \textbf{Discount factor} $\gamma = 0.95$: yields an effective
    observation window of ${\sim}1/(1{-}\gamma) = 20$ steps, allowing
    the bandit to adapt as callees stabilize and entity difficulty shifts.
    \item \textbf{Exploration constant} $C = 2\sqrt{2}$: following the
    standard UCB1 setting~\citep{auer2002finite}.
    \item \textbf{Initialization}: all arms start with
    $n_\gamma(e) = 0$, $\bar{r}_\gamma(e) = 0$. Untried entities
    receive score $\infty \times (1{-}\phi_e)$, ensuring they are
    explored first with gap-proportional priority.
\end{itemize}

\subsection{Expansion Implementation Details}
\label{app:expansion_details}

\paragraph{Error Batch Construction.}
We use beam width $W = 3$ throughout all experiments, balancing the hedging
benefit of diversified expansion (Theorem~\ref{thm:best_branch_advantage})
against per-iteration cost. Given failing tests $\mathcal{F}_e(v)$ for
entity $e$ at state $v$, we sample $W$ batches independently: each batch
$\mathbf{e}_k$ is drawn by selecting $\lceil|\mathcal{F}_e(v)|/2\rceil$
failures uniformly at random without replacement from $\mathcal{F}_e(v)$,
with batches generated sequentially and any duplicate redrawn until $W$
distinct batches are obtained. This ensures each batch covers roughly half
the failure set while varying in composition across the $W$ branches.

\paragraph{Small Failure Set Handling.}
When $|\mathcal{F}_e(v)|$ is too small for $W$ distinct same-sized subsets
to exist, we relax the batch-size constraint and draw $W$ distinct nonempty
subsets of $\mathcal{F}_e(v)$ covering a range of sizes—e.g.,
$|\mathcal{F}_e(v)| = 2$ admits the three subsets $\{f_1\}$, $\{f_2\}$,
$\{f_1, f_2\}$. Theorem~\ref{thm:best_branch_advantage} applies in all
such cases since it requires only $\mathbf{e}_j \subseteq \mathcal{F}_e(v)$.

\paragraph{Diagnosis-Prescription Process.}
Each refinement follows a two-stage process. In the \emph{diagnosis}
stage, the LLM analyzes the error batch to identify the knowledge gap:
what information is missing or ambiguous in the current specification
that caused the incorrect generation. In the \emph{prescription} stage,
the LLM produces a targeted specification update addressing the
diagnosed gap. This decomposition prevents the LLM from making
unfocused edits and ensures each refinement is traceable to a specific
failure signal. Prompt templates for both stages are provided in
Appendix~\ref{app:prompts}.

\section{Case Study: Module-Level Search Dynamics}
\label{app:module_case_study}

The convergence analysis in \S\ref{subsubsec:visualization} characterizes search behavior at the \emph{aggregate} level---module-level $\bar\phi$ across budget steps. Here we zoom in to \emph{entity-level} dynamics on a single module, \texttt{stocktrends/indicators} (6 entities, 17 test cases), to expose the concrete mechanisms behind each method's success or failure. Figure~\ref{fig:module_case_study} presents a side-by-side trace of Iterative Refinement, ACE, and \textsc{\name} at budget checkpoints $B \in \{10, 20, 30, 40, 50\}$, showing the verbatim specification state (labeled Doc Before in the figure), test errors (if any), the LLM's self-generated reflection, and the resulting refinement (Doc After), alongside per-entity pass rate radar charts.

\begin{figure*}[h]
\captionsetup{skip=4pt}
    \centering
    \includegraphics[width=\textwidth]{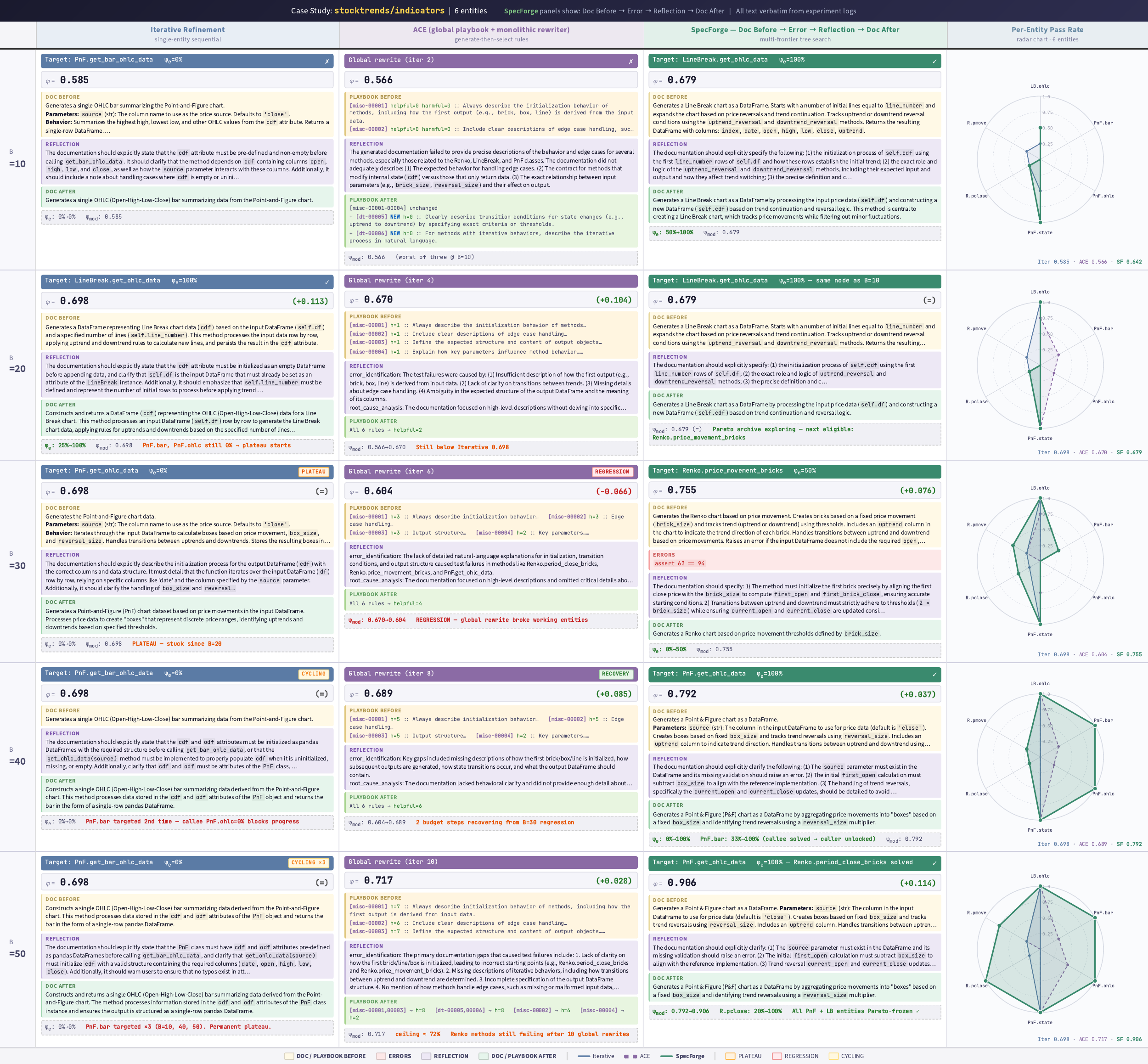}
    \caption{
        \textbf{Module-level search dynamics on \texttt{stocktrends/indicators}} (6 entities).
        Each row corresponds to a budget checkpoint; columns show the full refinement trace for Iterative Refinement (left), ACE (center-left), \textsc{\name} (center-right), and per-entity pass rate radar charts (right).
        All text is verbatim from experiment logs, in which the specification is labeled ``Doc''.
        Color coding: \colorbox{yellow!15}{Doc Before} (yellow), \colorbox{red!10}{Errors} (red), \colorbox{violet!12}{Reflection} (purple), \colorbox{green!12}{Doc After} (green).
        Status tags indicate \colorbox{orange!15}{PLATEAU}, \colorbox{red!12}{REGRESSION}, \colorbox{yellow!18}{CYCLING}, and \colorbox{green!15}{RECOVERY}.
        Radar charts overlay all three methods per checkpoint: Iterative (blue, solid), ACE (purple, dashed), \textsc{\name} (green, bold).
    }
    \label{fig:module_case_study}
\end{figure*}

\subsection{Iterative Refinement: Diagnosis Without Action}
\label{app:case_iterative}

The Iterative column reveals a striking pattern: the LLM \emph{correctly diagnoses} the root cause of failure but \emph{lacks the mechanism to act on its diagnosis}.

At $B{=}10$, the search targets \texttt{PnF.get\_bar\_ohlc\_data} ($\phi_e{=}0\%$), a caller that depends on \texttt{PnF.get\_ohlc\_data} (the callee) to populate the \texttt{cdf} attribute. The reflection correctly identifies that ``\texttt{cdf} must be pre-defined and non-empty'' and that the method ``depends on \texttt{cdf} containing columns \texttt{open}, \texttt{high}, \texttt{low}, and \texttt{close}.'' Yet the refinement can only modify the \emph{caller's} specification---it cannot redirect budget to fix the callee whose broken implementation produces the malformed \texttt{cdf} in the first place. The entity remains at $\phi_e{=}0\%$.

By $B{=}20$, the search has shifted to \texttt{LineBreak.get\_ohlc\_data} and achieves a genuine success ($\phi_e$: $25\%{\to}100\%$), raising the module-level score to $\phi{=}0.698$. However, from this point onward, the search enters a \textbf{permanent plateau}: \texttt{PnF.get\_ohlc\_data} (the callee) never gets solved, so every subsequent attempt at the caller is doomed. The reflections at $B{=}40$ and $B{=}50$ both explicitly state that ``\texttt{get\_ohlc\_data(source)} must be implemented to properly populate \texttt{cdf}''---diagnosing the callee as the bottleneck for the third time---but the single-entity sequential strategy has no way to redirect to the blocking dependency.

This produces the \textbf{cycling} pattern visible in the title bars: the same entity \texttt{PnF.get\_bar\_ohlc\_data} is targeted at $B{=}10$, $B{=}40$, and $B{=}50$, with identical outcomes ($\phi_e{=}0\%$) each time. The radar charts confirm the stagnation: the Iterative polygon (blue) is identical from $B{=}20$ through $B{=}50$, with persistent zero-valued vertices at \texttt{PnF.bar}, \texttt{PnF.ohlc}, and \texttt{R.pmove}.

\subsection{ACE: Regression and Recovery Tax}
\label{app:case_ace}

ACE's global playbook mechanism exhibits a different but equally costly failure mode. Its rules accumulate \texttt{helpful} counts across iterations (e.g., \texttt{[misc-00001] helpful=0} at $B{=}10$ rising to \texttt{helpful=7} by $B{=}50$), and each iteration rewrites \emph{all} entity specifications simultaneously from the updated playbook. This monolithic approach has no mechanism to protect previously solved entities.

The critical event occurs at $B{=}30$: a global rewrite causes \textbf{regression} from $\phi{=}0.670$ to $\phi{=}0.604$, a drop of $-0.066$. The reflection diagnoses the failure accurately---``the documentation focused on high-level descriptions and omitted critical details about method behavior''---but this diagnosis applies equally to the entities that were \emph{already working}. Because ACE rewrites all specifications from the playbook rather than preserving solved entities, the rewrite breaks previously correct descriptions of \texttt{PnF.get\_state} and \texttt{LineBreak.get\_ohlc\_data}, which regress from $\phi_e{=}100\%$ to $\phi_e{=}50\%$ and $\phi_e{=}75\%$ respectively.

The next two budget steps ($B{=}40$, $B{=}50$) are spent \textbf{recovering} from this regression rather than making new progress. By $B{=}40$, ACE recovers to $\phi{=}0.689$, only marginally above its pre-regression score of $0.670$ at $B{=}20$, and by $B{=}50$ it reaches $\phi{=}0.717$, a net gain of only $+0.047$ over three budget steps. Meanwhile, both Renko entities remain at or near zero throughout all 10 iterations, because the global rules are too coarse to capture the entity-specific algorithmic details (e.g., brick initialization alignment, reversal threshold logic) that these methods require. The \texttt{[misc-00004]} rule (``explain how key parameters influence method behavior'') stays frozen at \texttt{helpful=2} from $B{=}20$ onward, confirming that parameter-specific behavioral descriptions are the hardest to capture at the playbook level.

\subsection{\name: Callee-Before-Caller Unlocking}
\label{app:case_docsearch}

\textsc{\name}'s trajectory illustrates how dependency-guided search converts the same budget into monotonic progress.

At $B{=}10$, the D-UCB bandit selects \texttt{LineBreak.get\_ohlc\_data}---a leaf entity with no unsolved callees---and solves it in a single step ($\phi_e$: $50\%{\to}100\%$). At $B{=}20$, the Pareto archive retains this state but finds no eligible entity that improves the frontier, so the module score holds at $\phi{=}0.679$. Crucially, \emph{no budget is wasted}: the archive simply records that the current state is locally optimal and explores alternative branches.

The critical breakthrough comes at $B{=}30{-}40$. At $B{=}30$, the search targets \texttt{Renko.price\_movement\_bricks}, whose test failure (\texttt{assert 63 == 94}) reveals a brick-count mismatch. The reflection identifies the precise algorithmic gap: ``the first brick must align the close price with \texttt{brick\_size} to compute \texttt{first\_open}.'' This targeted, entity-specific diagnosis---impossible under ACE's global playbook---raises $\phi_e$ from $0\%$ to $50\%$.

At $B{=}40$, the dependency constraint makes \texttt{PnF.get\_ohlc\_data} eligible (its callee dependencies are resolved). The reflection pinpoints the initialization formula: ``\texttt{first\_open = roundit(first\_value, box\_size) - box\_size}.'' Solving this callee ($\phi_e$: $0\%{\to}100\%$) triggers a \textbf{cascade}: its caller \texttt{PnF.get\_bar\_ohlc\_data} simultaneously jumps from $33\%$ to $100\%$ \emph{without any modification to its own specification}. This is the ``callee solved $\to$ caller unlocked'' effect that dependency-constrained search is designed to produce, and that Iterative refinement structurally cannot achieve.

By $B{=}50$, the Pareto archive allocates remaining budget to \texttt{Renko.period\_close\_bricks} ($\phi_e$: $20\%{\to}100\%$), and the module reaches $\phi{=}0.906$. The radar chart at $B{=}50$ shows the \textsc{\name} polygon (green) nearly filling the entire chart, while Iterative (blue) remains frozen at its $B{=}20$ shape.

\subsection{Key Takeaways}
\label{app:case_takeaways}

This module-level trace exposes three failure modes and their structural causes:

\begin{enumerate}[leftmargin=*, itemsep=4pt]

    \item \textbf{Dependency blindness} (Iterative). The LLM's reflection is a powerful diagnostic tool---it correctly identifies cross-entity dependencies as the root cause of failure. But without a mechanism to act on dependency structure, this diagnostic power is wasted. The cycling pattern (same entity targeted 3 times, same diagnosis, same $\phi_e{=}0\%$) is a direct consequence of treating entities as independent.

    \item \textbf{Collateral damage} (ACE). Global rewriting treats the specification bundle as a monolithic artifact, making it impossible to improve one entity without risking others. The regression at $B{=}30$ and subsequent two-step recovery illustrate the ``two steps forward, one step back'' dynamic that caps ACE's effective progress rate. Even at convergence ($B{=}50$, $\phi{=}0.717$), entity-specific algorithmic details remain beyond the reach of global rules.

    \item \textbf{Structural unlocking} (\textsc{\name}). By constraining refinement to entities whose callees are already solved, \textsc{\name} converts callee improvements into free caller improvements. The $B{=}40$ cascade---where solving \texttt{PnF.get\_ohlc\_data} simultaneously unlocks \texttt{PnF.get\_bar\_ohlc\_data}---demonstrates that dependency-aware search does not merely avoid interference but actively \emph{exploits} coupling as a source of compounding returns.

\end{enumerate}

\section{Fidelity of the Self-Generated Improvement Signal}
\label{sec:sensitivity_pseudo_test_quality}

A recursive self-improvement loop is only as sound as the signal that drives it.
In \textsc{\name} the loop is fully self-supervised: the agent synthesizes the tests that score its own specifications, and no external label enters the loop.
This raises a question that the main results cannot settle on their own, because every method in Table~\ref{tab:main_results} is \emph{evaluated} on hidden ground-truth tests but \emph{optimized} against self-generated ones: does the self-generated signal track true behavioral correctness, or does the search merely learn to satisfy its own tests?
In the latter case the measured gains would constitute reward hacking rather than self-improvement.
Two properties are therefore required of the signal.
\emph{Fidelity}: optimizing against the self-generated signal must yield essentially the same outcome as optimizing against the ground truth.
\emph{Grounding}: the signal must be derived from execution rather than from the model's own judgment, since Theorem~\ref{thm:signal_corruption} shows that a corrupted signal degrades refinement quality monotonically.
This section verifies both properties and characterizes how performance degrades as the signal is weakened.

\paragraph{Construction of the signal.}
Test oracles are obtained by executing LLM-proposed inputs on the reference implementation to derive the expected outputs (\S\ref{app:testcases}); output correctness is therefore guaranteed by construction.
The only residual imperfection lies in input quality: some generated inputs may be redundant, exercise overlapping code paths, or fail to cover critical edge cases.
Error-diversified expansion (\S\ref{subsec:expansion}) is designed for precisely this kind of imperfection: by conditioning each child on a different subset of failing tests rather than on the full test set, it bounds the influence of any individual weak or redundant test.

\paragraph{Experimental design.}
We hold the search procedure, budget, and backbone (GPT-4o) fixed and vary only the improvement signal (Table~\ref{tab:feedback_sensitivity}).
(i)~\emph{Signal source}: the LLM-generated tests are replaced by the hidden ground-truth tests, an oracle that is unavailable in practice and serves as an upper bound on signal quality.
(ii)~\emph{Signal grounding}: execution feedback is replaced by an LLM-as-a-judge score of the generated code against the specification, i.e., the model evaluates itself without executing anything.
(iii)~\emph{Signal coverage}: the coverage threshold $\tau_{\mathrm{cov}}$ used during test generation is lowered from its default of 0.9 (\S\ref{app:testcases}) to 0.7, 0.5, and 0.3, so that the search is driven by progressively fewer tests.

\paragraph{Fidelity: self-generated vs.\ ground-truth tests.}
Replacing LLM-generated tests with ground-truth tests improves $\bar{\phi}$ only marginally, from 93.2 to 94.8, corresponding to a gain of just 1.6 percentage points.
The self-generated signal is thus near-equivalent to the ground truth as an optimization target: the loop does not exploit its own tests, and the improvements reported in Table~\ref{tab:main_results} reflect genuinely better behavioral specifications rather than reward hacking.

\paragraph{Grounding: execution feedback vs.\ LLM judgment.}
Replacing execution-based feedback with LLM-as-a-judge scoring causes a large performance drop from 93.2 to 73.6, i.e., a decrease of 19.6 percentage points.
Self-assessment is exactly the corrupted-signal regime of Theorem~\ref{thm:signal_corruption}: the robustness of the loop to imperfect tests stems from grounding in execution, not from tolerance to arbitrary noise, and the execution signal cannot be substituted by model judgment.

\paragraph{Coverage: sparser generated test suites.}
Lowering the coverage threshold $\tau_{\mathrm{cov}}$ from its default of 0.9 to 0.7 or 0.5 has only a very small effect on performance, with $\bar{\phi}$ remaining above 92.
At the more aggressive setting $\tau_{\mathrm{cov}} = 0.3$, performance decreases to 87.5, corresponding to an overall drop of 5.7 percentage points relative to the default configuration.
Degradation is graceful rather than brittle: the loop does not depend on a complete test suite, only on an execution-grounded one.

\paragraph{Implication for recursive self-improvement.}
Execution grounding matters more than signal density.
Thinning the execution signal degrades performance gracefully: even the sparsest test suite ($\tau_{\mathrm{cov}}{=}0.3$, 87.5) still outperforms the LLM judge (73.6) by 13.9 points.
A sparse but execution-grounded signal is therefore a better optimization target than a dense judgment-based one.
This points to a general principle: in recursive self-improvement, the agent may propose its own feedback signal, provided that signal is grounded in execution.
It also explains why code is a privileged substrate for self-improvement.
Code is executable, testable, and iterable by nature, so every attempt yields grounded feedback, such as failing assertions, type errors, and stack traces, which the agent can use as an improvement signal without any external supervision.

\begin{table}[t]
\centering
\small
\caption{Sensitivity to feedback signal (GPT-4o, DevEval+).}
\label{tab:feedback_sensitivity}
\begin{tabular}{lcc}
\toprule
Configuration & Feedback Signal & $\bar{\phi}$ (\%) \\
\midrule
\name\ (default) & LLM tests & 93.2 \\
Ground-truth tests & GT tests & 94.8 \\
LLM-as-a-judge & Judge score & 73.6 \\
$\tau_{\mathrm{cov}} = 0.7$ & Reduced tests & 93.4 \\
$\tau_{\mathrm{cov}} = 0.5$ & Reduced tests & 92.6 \\
$\tau_{\mathrm{cov}} = 0.3$ & Reduced tests & 87.5 \\
\bottomrule
\end{tabular}
\end{table}

\section{Prompt Templates}
\label{app:prompts}

All prompts in this appendix are reproduced verbatim from our implementation; they refer to the specification as ``documentation''. The error-driven refinement process consists of two stages: diagnosis and prescription.

\subsection{Stage 1: Diagnosis Prompt}

\begin{tcolorbox}[breakable, colback=gray!5, colframe=gray!50, title=Diagnosis Prompt Template]
\small\ttfamily
You are analyzing test failures to diagnose why an LLM failed to generate correct code.

\textbf{Entity Information:}\\
- Name: \{entity\_name\}\\
- Type: \{entity\_type\} (function/class/method)\\
- Current Documentation:\\
\{current\_doc\}

\textbf{Test Failures:}\\
The generated code failed the following tests:\\
\{error\_messages\}

\textbf{Task:}\\
Analyze the test failures and diagnose:
\begin{enumerate}
    \item \textbf{What went wrong?} Identify the specific errors (type mismatches, incorrect outputs, missing logic, etc.)
    \item \textbf{Why did the LLM fail?} What information is missing or ambiguous in the documentation that caused the LLM to misunderstand the requirements?
    \item \textbf{What knowledge gap exists?} What does the LLM need to know to generate correct code?
\end{enumerate}

\textbf{Output:}\\
Provide a concise diagnosis in the following format:\\
- Error Type: [brief description]\\
- Root Cause: [what the LLM misunderstood]\\
- Missing Information: [what needs to be clarified]
\end{tcolorbox}

\subsection{Stage 2: Prescription Prompt}

\begin{tcolorbox}[breakable, colback=gray!5, colframe=gray!50, title=Prescription Prompt Template]
\small\ttfamily
You are refining documentation for a code entity based on a diagnosis of test failures.

\textbf{Entity Information:}\\
- Name: \{entity\_name\}\\
- Type: \{entity\_type\}\\
- Current Documentation:\\
\{current\_doc\}

\textbf{Diagnosis:}\\
\{diagnosis\_output\}

\textbf{Task:}\\
Refine the documentation to address the diagnosed issues. The refined documentation should:
\begin{enumerate}
    \item Directly address the identified knowledge gap
    \item Clarify the ambiguous or missing information
    \item Be precise enough for an LLM to generate correct code
    \item Preserve all existing correct information
\end{enumerate}

\textbf{Output:}\\
Provide only the refined documentation, without any explanation.
\end{tcolorbox}

\subsection{Code Generation Prompt}

\begin{tcolorbox}[breakable, colback=gray!5, colframe=gray!50, title=Code Generation Prompt Template]
\small\ttfamily
Generate Python code for all entities in this module based on their documentation.

\textbf{Module:} \{module\_name\}

\textbf{Documentation:}\\
\{all\_entity\_documentation\}

\textbf{Requirements:}
\begin{itemize}
    \item Implement each entity exactly according to its documentation
    \item Ensure all cross-entity calls are consistent
    \item Do not add functionality not specified in the documentation
\end{itemize}

\textbf{Output:}\\
Provide the complete Python implementation for all entities, without any explanation or markdown.
\end{tcolorbox}

\subsection{Test Case Generation Prompt}

\begin{tcolorbox}[breakable, colback=gray!5, colframe=gray!50, title=Test Case Generation Prompt Template]
\small\ttfamily
You are generating test cases for a code entity. Each test case includes inputs, expected outputs, and a description of what is being tested.

\textbf{Entity Information:}\\
- Name: \{entity\_name\}\\
- Type: \{entity\_type\} (function/class/method)\\
- Signature: \{signature\}\\
- Source Code:\\
\{source\_code\}

\textbf{Dependencies:}\\
The following entities are called by this entity:\\
\{dependency\_signatures\}

\textbf{Task:}\\
Generate diverse test cases that cover:
\begin{enumerate}
    \item Normal cases: typical usage scenarios
    \item Edge cases: boundary conditions, empty inputs, single elements
    \item Corner cases: special values, type boundaries
\end{enumerate}

\textbf{Requirements:}
\begin{itemize}
    \item Each test case must include input, expected output, and description
    \item The description should explain what aspect of the entity is being tested
    \item Cover different code paths and branches
    \item Include cases that test error handling if applicable
\end{itemize}

\textbf{Output Format:}\\
Provide test cases as a JSON list:\\
\texttt{[}\\
\texttt{~~\{}\\
\texttt{~~~~"input": \{"args": [...], "kwargs": \{...\}\},}\\
\texttt{~~~~"expected\_output": ...,}\\
\texttt{~~~~"description": "Tests that ..."}\\
\texttt{~~\},}\\
\texttt{~~...}\\
\texttt{]}

Provide only the JSON list, without any explanation.
\end{tcolorbox}

\subsection{Iterative Refinement Prompt (Baseline)}

\begin{tcolorbox}[breakable, colback=gray!5, colframe=gray!50, title=Iterative Refinement Prompt Template]
\small\ttfamily
You are refining documentation for a code entity based on test failures.

\textbf{Entity Information:}\\
- Name: \{entity\_name\}\\
- Type: \{entity\_type\}\\
- Current Documentation:\\
\{current\_doc\}

\textbf{Test Failures:}\\
The generated code failed the following tests:\\
\{all\_error\_messages\}

\textbf{Task:}\\
Refine the documentation to fix the issues revealed by the test failures. The refined documentation should enable an LLM to generate correct code.

\textbf{Output:}\\
Provide only the refined documentation, without any explanation.
\end{tcolorbox}

\textbf{Note:} Unlike \textsc{\name}'s two-stage diagnosis-prescription process, the iterative baseline conditions on \emph{all} error messages simultaneously in a single refinement step, which can lead to the signal corruption phenomenon described in \S\ref{subsec:expansion}.

\subsection{Commercial Agent Prompts}
\label{app:commercial_prompts}

We evaluate commercial coding agents (Cursor and Claude Code) on the task of generating specifications for code entities. Below are the prompt templates for each configuration.

\subsubsection{Base Configuration Prompt}

\begin{tcolorbox}[breakable, colback=gray!5, colframe=gray!50, title=Commercial Agent Base Prompt]
\small\ttfamily
Generate documentation for all entities in this module. The documentation should enable an LLM to correctly implement each entity.

\textbf{Module:} \{module\_name\}

\textbf{Entities (signatures and reference code):}\\
\{all\_entity\_signatures\_and\_code\}

\textbf{Requirements:}
\begin{itemize}
    \item Describe the purpose, parameters, return values, and behavior of each entity
    \item Include any edge cases or special conditions
    \item Ensure documentation is precise enough for code generation
\end{itemize}
\end{tcolorbox}

\subsubsection{+Topo Configuration Prompt}

\begin{tcolorbox}[breakable, colback=gray!5, colframe=gray!50, title=Commercial Agent +Topo Prompt]
\small\ttfamily
Generate documentation for all entities in this module. The documentation should enable an LLM to correctly implement each entity.

\textbf{Module:} \{module\_name\}

\textbf{Dependency Structure (for context):}\\
The following shows the call dependencies between entities (callee $\leftarrow$ caller):\\
\{dependency\_graph\}

\textbf{Entities (signatures and reference code):}\\
\{all\_entity\_signatures\_and\_code\}

\textbf{Requirements:}
\begin{itemize}
    \item Describe the purpose, parameters, return values, and behavior of each entity
    \item Use the dependency information to ensure consistency across related entities
    \item Ensure documentation is precise enough for code generation
\end{itemize}
\end{tcolorbox}

\subsubsection{+Test Configuration Prompt}

\begin{tcolorbox}[breakable, colback=gray!5, colframe=gray!50, title=Commercial Agent +Test Prompt]
\small\ttfamily
Generate documentation for all entities in this module. Use a test-driven approach to verify documentation quality.

\textbf{Module:} \{module\_name\}

\textbf{Entities (signatures and reference code):}\\
\{all\_entity\_signatures\_and\_code\}

\textbf{Workflow:}
\begin{enumerate}
    \item Generate documentation for each entity
    \item Generate test cases based on your documentation
    \item Use the tests to verify that your documentation is unambiguous and complete
    \item If tests reveal ambiguities, refine the documentation
    \item Repeat until documentation passes self-verification
\end{enumerate}

\textbf{Requirements:}
\begin{itemize}
    \item Documentation should be testable and verifiable
    \item Generate comprehensive test cases covering normal and edge cases
    \item Refine documentation based on test feedback
\end{itemize}
\end{tcolorbox}

\subsubsection{+Topo+Test Configuration Prompt}

\begin{tcolorbox}[breakable, colback=gray!5, colframe=gray!50, title=Commercial Agent +Topo+Test Prompt]
\small\ttfamily
Generate documentation for all entities in this module. Use dependency information and test-driven verification.

\textbf{Module:} \{module\_name\}

\textbf{Dependency Structure (for context):}\\
The following shows the call dependencies between entities (callee $\leftarrow$ caller):\\
\{dependency\_graph\}

\textbf{Entities (signatures and reference code):}\\
\{all\_entity\_signatures\_and\_code\}

\textbf{Workflow:}
\begin{enumerate}
    \item Generate documentation for each entity, using dependency context for consistency
    \item Generate test cases based on your documentation
    \item Use the tests to verify documentation quality
    \item Refine documentation if tests reveal ambiguities
\end{enumerate}

\textbf{Requirements:}
\begin{itemize}
    \item Use dependency information to ensure cross-entity consistency
    \item Documentation should be testable and verifiable
    \item Generate comprehensive test cases for self-verification
\end{itemize}
\end{tcolorbox}

\section{Baseline Implementation Details}
\label{app:baselines}

\paragraph{Static Methods.}
\textbf{RepoAgent}~\citep{repoagent}: We use the official implementation with default hyperparameters. Documentation is generated in a single pass without test feedback.
\textbf{DocAgent}~\citep{docagent}: We use the official implementation. Multi-agent collaboration generates documentation iteratively until convergence.

\paragraph{Commercial Agents.}

We evaluate \textbf{Cursor} and \textbf{Claude Code} under four configurations: (1) \emph{Base}: generate specifications for all entities in a single session; (2) \emph{+Topo}: provide the dependency graph as context for one-shot specification generation; (3) \emph{+Test}: prompt the agent to generate test cases from its specifications, then iteratively refine code until tests pass, with a budget of 50 manual action approvals (agents typically terminate before exhausting the budget); (4) \emph{+Topo+Test}: combine both strategies. Cursor supports Gemini-2.5-Flash, Gemini-3.1-Pro, and Claude-4.5-Sonnet but not Qwen3-235B-A22B or GPT-4o; Claude Code only supports Claude models. Prompts are provided in Appendix~\ref{app:commercial_prompts}.

\paragraph{Iterative Refinement.}
Algorithm~\ref{alg:iterative} presents the iterative baseline. The \textbf{Iterative} variant selects entities randomly, while \textbf{Iterative+Topo} follows reverse topological order.

\begin{algorithm}[H]
\caption{Iterative Refinement Baseline}
\label{alg:iterative}
\begin{algorithmic}[1]
\Require Specification bundle $\mathcal{D}$, max iterations $T$
\For{$t = 1$ to $T$}
    \State $\mathcal{F} \gets$ entities with failing tests
    \State $e \gets$ \textsc{SelectEntity}($\mathcal{F}$) \Comment{Random or Topo order}
    \State $\mathbf{e} \gets$ error messages for $e$
    \State $d'_e \gets \textsc{Refine}(d_e, \mathbf{e})$
    \State $\mathcal{D} \gets \mathcal{D}[e \mapsto d'_e]$
\EndFor
\end{algorithmic}
\end{algorithm}

\paragraph{Prompt Optimization Methods.}
We adapt two state-of-the-art prompt optimizers to our setting
by treating the entire specification bundle
$\mathcal{D} = (d_1, \ldots, d_n)$ as a single optimizable prompt.

\textbf{GEPA}~\citep{gepa}. We implement a DSPy module where
the input is the module's source code and the output is the
complete specification bundle for all entities. The metric function
evaluates each candidate by generating code from the specifications,
executing the same LLM-generated test suite that \name receives
during search (\S\ref{app:testcases}; ground-truth tests are
withheld from all methods), and returning both a scalar score
(solve rate) and textual feedback (formatted error messages
from failing tests). We run GEPA with \texttt{auto="heavy"}
and set \texttt{max\_metric\_calls=50} to match \name's
budget $B = 50$. Each metric call corresponds to one full
spec$\to$code$\to$test evaluation cycle.

\textbf{ACE}~\citep{ace}. We implement ACE's
Generator-Reflector-Curator pipeline adapted for specification
optimization. The Generator produces specifications from source
code conditioned on the current playbook. The Reflector analyzes
test failures to extract lessons (e.g., ``entity X's documentation
lacks return type specification''). The Curator converts lessons
into delta bullets appended to the playbook. Each iteration
performs one full spec$\to$code$\to$test evaluation cycle against
the same LLM-generated test suite used by \name
(\S\ref{app:testcases}); we run for 50 iterations to match
\name's budget. Following the original implementation, we use
execution feedback (test pass/fail + error messages) without
ground-truth labels.

Critically, both methods optimize the specification bundle
monolithically---they cannot attribute failures to individual
entities or reason about inter-entity dependencies. This design
choice is intentional: it evaluates whether state-of-the-art
prompt optimization can succeed without the structural
decomposition that \name introduces.

\paragraph{DocSearch.}
We use the official implementation~\citep{cheng2026docsearch} with its default hyperparameters, guided by the same LLM-generated test suite that \name receives (\S\ref{app:testcases}) and evaluated on the same hidden ground-truth tests. All results are obtained on the version of \textsc{DevEval+} used in this paper, which is a deliberately more challenging setup than the one in the original DocSearch paper, curated to stress-test \name: modules are larger and more densely tested (12.4 entities per module and 3.1 tests per entity, vs.\ 11.4 and 2.8).

\section{DevEval+ Benchmark}
\label{app:benchmark}

This appendix provides comprehensive details about the \textsc{DevEval+} benchmark used for evaluation.

\subsection{Overview}

\textsc{DevEval+} is a curated benchmark for module-level code generation, designed to evaluate methods under complex dependency structures.
The benchmark comprises \textbf{20 Python modules} containing \textbf{247 entities} (functions, classes, and methods) validated by \textbf{760 test cases}.
Modules are sourced from two origins: (1) DevEval~\citep{deveval}, an existing repository-level code generation benchmark, and (2) additional open-source GitHub repositories selected to increase domain diversity and structural complexity.

Key statistics of the benchmark:
\begin{itemize}[leftmargin=*, itemsep=2pt]
    \item \textbf{Average dependency depth}: 4.2 levels
    \item \textbf{Maximum dependency depth}: 10 levels
    \item \textbf{Average entities per module}: 12.4
    \item \textbf{Average tests per entity}: 3.1
\end{itemize}

\subsection{Module Composition}

Table~\ref{tab:benchmark_modules} presents the complete list of modules in \textsc{DevEval+}, including their source, domain, and structural statistics.

\begin{table}[h]
\centering
\caption{Complete module list of \textsc{DevEval+}. \textbf{Source}: D = DevEval, G = GitHub. Modules marked with $\dagger$ are included in the cross-language subset \textsc{DevEval+-Java}.}
\label{tab:benchmark_modules}
\renewcommand\arraystretch{1.15}
\small
\begin{tabular}{l|c|l}
\Xhline{1.2pt}
\rowcolor{CadetBlue!25}
\textbf{Module} & \textbf{Source} & \textbf{Domain} \\
\Xhline{1.2pt}
\texttt{arxiv\_digest}$^{\dagger}$ & D & Academic/NLP \\
  \rowcolor{gray!8}
  \texttt{hybrid\_images} & D & Image Processing \\
  \texttt{textcnn} & D & Deep Learning/NLP \\
  \rowcolor{gray!8}
  \texttt{chakin} & D & NLP/Embeddings \\
  \texttt{geotext}$^{\dagger}$ & D & Geography/NLP \\
  \rowcolor{gray!8}
  \texttt{hone}$^{\dagger}$ & D & Data Processing \\
  \texttt{lice}$^{\dagger}$ & D & Utilities \\
  \rowcolor{gray!8}
  \texttt{particle-swarm-optimization}$^{\dagger}$ & D & Optimization \\
  \texttt{readtime}$^{\dagger}$ & D & Text Processing \\
  \rowcolor{gray!8}
  \texttt{stocktrends}$^{\dagger}$ & D & Finance \\
  \hline
  \texttt{interpreter}$^{\dagger}$ & G & Programming Languages \\
  \rowcolor{gray!8}
  \texttt{croniter}$^{\dagger}$ & G & Scheduling \\
  \texttt{peakutils}$^{\dagger}$ & G & Signal Processing \\
  \rowcolor{gray!8}
  \texttt{pyfaidx} & G & Bioinformatics \\
  \texttt{lexer}$^{\dagger}$ & G & Compilers \\
  \rowcolor{gray!8}
  \texttt{schedule}$^{\dagger}$ & G & Scheduling \\
  \texttt{patternpy}$^{\dagger}$ & G & Finance \\
  \rowcolor{gray!8}
  \texttt{schema}$^{\dagger}$ & G & Data Validation \\
 \texttt{parser}$^{\dagger}$ & G & Utilities \\
 \rowcolor{gray!8}
 \texttt{streaming-indicators}$^{\dagger}$ & G & Time Series \\
  \Xhline{1.2pt}
  \end{tabular}
  \end{table}

\subsection{Data Collection and Curation}

\paragraph{Source Selection Criteria.}
We selected modules based on the following criteria:
\begin{enumerate}[leftmargin=*, itemsep=2pt]
    \item \textbf{Non-trivial dependencies}: Modules must contain at least 3 levels of call depth to exhibit meaningful output coupling effects.
    \item \textbf{Testability}: All entities must have well-defined input-output behavior amenable to automated testing.
    \item \textbf{Self-containment}: Modules should minimize external dependencies beyond the Python standard library to ensure reproducible evaluation.
    \item \textbf{Domain diversity}: We balanced modules across application domains to avoid benchmark-specific overfitting.
\end{enumerate}

\paragraph{From DevEval.}
We selected 10 modules from DevEval~\citep{deveval}: \texttt{arxiv\_digest}, \texttt{hybrid\_images}, \texttt{textcnn},
\texttt{chakin}, \texttt{geotext}, \texttt{hone}, \texttt{lice}, \texttt{particle-swarm-optimization}, \texttt{readtime}, and
\texttt{stocktrends}. These modules span diverse domains including NLP, image processing, finance, and optimization. DevEval
modules provide high-quality, human-verified test cases. We retained the original test suites without modification.

 \paragraph{From GitHub.}
  To increase structural complexity and domain coverage, we collected 10
  additional modules from open-source GitHub repositories:
  \texttt{interpreter}, \texttt{croniter}, \texttt{peakutils},
  \texttt{pyfaidx}, \texttt{lexer}, \texttt{schedule}, \texttt{patternpy},
  \texttt{schema}, \texttt{parser}, and \texttt{streaming-indicators}. The data collection process consists of two phases: repository selection and quality assurance.

In the \textbf{repository selection} phase, we searched GitHub for Python repositories with permissive licenses (MIT, Apache 2.0, BSD) and non-trivial dependency structures (call graph depth $\geq 3$). Crucially, we required candidates to include existing ground-truth test suites, as this ensures reliable evaluation without introducing annotation artifacts from manual test construction.

In the \textbf{quality assurance} phase, two authors independently set up the environment for each candidate repository and executed its test suite to verify functionality. Following this sanity check, annotators reviewed each test case for correctness and quality, verifying that tests accurately capture intended behavior rather than implementation quirks. Repositories were included only with unanimous agreement on test suite quality.

\subsection{Dependency Structure}

The benchmark exhibits significant structural complexity to induce meaningful output coupling effects. Across all 20 modules, the average dependency graph depth is 4.2 levels with a maximum of 10 levels. Approximately 39\% of entities are leaf nodes (no callees), while 30\% are root nodes (no callers). This structure ensures that refinements to callee specifications can propagate through multiple levels, creating the conditions necessary to observe and address the whack-a-mole phenomenon.

\subsection{Test Suite Construction}

All test cases in \textsc{DevEval+} are sourced from ground-truth test suites: DevEval modules include human-verified tests from the original benchmark, while GitHub modules contain tests from their respective repositories. We performed verification on top of the upstream tests, with lightweight harness adjustments (import-path fixes and fixture rewiring) to integrate them into our evaluation pipeline.

\paragraph{Quality Assurance.}
All test cases were validated through:
\begin{enumerate}[leftmargin=*, itemsep=2pt]
    \item \textbf{Execution verification}: Every test passes on the reference implementation.
    \item \textbf{Correctness review}: Two authors independently verified that tests capture intended behavior rather than implementation quirks.
    \item \textbf{Independence check}: Tests for entity $e_i$ do not depend on implementation details of entities other than $e_i$'s direct callees.
\end{enumerate}

\subsection{Cross-Language Subset (DevEval+-Java)}
\label{app:deveval-java}

For cross-language transfer experiments (\S\ref{subsubsec:cross_lang}),
we constructed \textsc{DevEval+-Java} by translating test harnesses for a 16-module subset (marked with $\dagger$ in Table~\ref{tab:benchmark_modules}). This subset contains 208 entities validated by 680 test cases.

\paragraph{Translation Process.}
\begin{enumerate}[leftmargin=*, itemsep=2pt]
    \item \textbf{Test input preservation}: Input values were preserved exactly, with Python data structures mapped to Java equivalents (e.g., \texttt{list} $\to$ \texttt{ArrayList}, \texttt{dict} $\to$ \texttt{HashMap}).
    \item \textbf{Expected output adaptation}: Outputs were adapted for Java type system while preserving semantic equivalence.
    \item \textbf{Harness rewriting}: Test harness code was manually rewritten in JUnit 5 format.
    \item \textbf{Verification}: Two authors independently verified semantic equivalence of translated tests.
\end{enumerate}

\paragraph{Subset Selection.}
The 16 modules were selected by excluding 4 modules whose Python
implementations depend on libraries without direct Java equivalents,
with the remaining modules satisfying:
\begin{itemize}[leftmargin=*, itemsep=2pt]
    \item Balanced source distribution (7 from DevEval, 9 from GitHub)
    \item Representative depth distribution (range: 3--10, mean: 4.3)
    \item Domain diversity (no more than 2 modules per domain)
\end{itemize}

\section{\texorpdfstring{New Feature Implementation Setup (\textsc{SWE-Dev})}{New Feature Implementation Setup (SWE-Dev)}}
\label{app:swedev}

This section details the new feature implementation experiment of \S\ref{subsubsec:new_feature}: how \textsc{SWE-Dev} tasks and their PRDs are constructed, what each condition receives, and what the comparison tests.

\paragraph{Task construction.}
Each \textsc{SWE-Dev} instance is built from an open-source PyPI package with an executable, developer-written test file~\citep{swe-dev}.
The test file is run under a tracer to obtain the call tree of the source functions it triggers; target functions are selected from this tree, and their implementations are removed from the repository.
Difficulty follows call-tree depth: easy instances have depth 1--2, and hard instances have depth $\geq 3$ and typically require implementing five or more interdependent functions.
The test file is the evaluation oracle.

\paragraph{How the PRD is generated.}
The PRD shipped with each instance is written by GPT-4o in two parts~\citep{swe-dev}.
A module-level description is synthesized from the test file.
A function-level description of each target is obtained by augmenting the target's original docstring with implementation-level details, with the prompt conditioned on the full code of its file, including the target function itself.
The PRD concatenates the module description with, for each target, its signature, the augmented docstring, and a list of related functions (\texttt{DEPENDENCIES}).
The PRD is therefore derived from the tests and the reference implementation, and it describes only the functions to be implemented: existing functions that the targets call appear in the dependency list by name only, without any description of their behavior.

\paragraph{Conditions.}
All conditions receive the same stubbed package source, in which the target functions are removed, and are scored by Pass@1 on the instance's test file following the \textsc{SWE-Dev} protocol.
They differ only in the accompanying text:
\begin{itemize}[leftmargin=*, itemsep=2pt]
    \item \textbf{PRD+Source}: the full \textsc{SWE-Dev} PRD described above.
    \item \textbf{Spec+Source} (RepoAgent, Claude Code$^*$, ACE, \textsc{\name}): the signature and original docstring of each target function, which state \emph{what} the new feature must provide, together with the specifications of the existing codebase produced by the method.
\end{itemize}

\paragraph{Specification generation and masking.}
Each method generates specifications on the complete package, so that the codebase is executable, as it is in real development before a feature is added, and every entity has a reference behavior from which \textsc{\name} can derive test oracles.
Deleting the target functions from a complete module beforehand would leave the module unable to run, since the remaining code still references them, which would rule out execution-based optimization methods such as \textsc{\name}.
Before the implementation step, we remove the specifications of the target functions.
No method observes the instance's evaluation test file.

\paragraph{Additional results analysis.}
\label{app:swedev_analysis}
The two conditions trade task-specific information against codebase-level information.
The PRD describes the target functions in detail but says nothing about the behavior of the existing code they build on; the Spec+Source conditions describe the target functions only minimally but describe the whole existing module.
New features are rarely self-contained: \textsc{SWE-Dev} targets call into existing functions, constants, and exception types, and a correct implementation must use them exactly as they behave.
For example, in the \texttt{vulture} configuration task, the PRD requires \texttt{\_check\_input\_config} to validate values against \texttt{DEFAULTS} and to raise \texttt{InputError}, but never states which keys \texttt{DEFAULTS} contains or how \texttt{InputError} is constructed.
We refer to this missing knowledge as the \emph{semantic gap} of the PRD.
The experiment therefore asks whether a better semantic understanding of the existing module lets an agent implement a high-quality feature from only the minimal information that defines it.
Because the PRD condition receives strictly more target-specific information, derived from the reference implementation, the comparison is conservative for the Spec+Source conditions.

\section{Test Case Generation}
\label{app:testcases}

This appendix describes the LLM-based test case generation process
used to guide \textsc{\name}'s search without access to
ground-truth tests.

\subsection{Overview}

\textsc{\name} operates as a self-improving framework where the
search is guided by \emph{LLM-generated} test cases rather than
ground-truth tests. This design choice reflects realistic deployment
scenarios where comprehensive test suites may not exist. The key
insight is that even imperfect tests provide valuable diagnostic
signals---test failures reveal knowledge gaps in the specifications,
enabling targeted refinement.

The test generation process follows a generate-then-validate approach:
\begin{enumerate}[leftmargin=*, itemsep=2pt]
    \item \textbf{Test generation}: The LLM generates complete test
    functions---including inputs, assertions, and error messages---based
    on entity signatures and reference source code.
    \item \textbf{Validation}: Each generated test is executed on the
    reference implementation. Only tests that pass are retained; those
    that fail (due to incorrect assertions or invalid inputs) are
    discarded.
\end{enumerate}

Two additional filters are applied post-validation: (i) tests that
reach the target entity only via transitive calls to other functions
are discarded, ensuring each retained test directly exercises its
target; and (ii) tests with identical normalized inputs and assertions
are deduplicated. This ensures test correctness by construction: every
retained test is consistent with the reference implementation and
contributes a distinct diagnostic signal.

\subsection{Coverage-Driven Test Generation}

For each entity $e_i$, we employ a coverage-driven generation
strategy. The LLM generates test functions iteratively until branch
coverage exceeds a target threshold $\tau_{\text{cov}}$. Each
iteration prompts the LLM to generate tests covering:
\begin{itemize}[leftmargin=*, itemsep=2pt]
    \item \textbf{Normal cases}: Typical usage scenarios with common
    input values
    \item \textbf{Edge cases}: Boundary conditions (empty inputs,
    single elements, maximum values)
    \item \textbf{Corner cases}: Special values that exercise specific
    code paths
    \item \textbf{Uncovered branches}: When coverage feedback is
    available, tests targeting uncovered code regions
\end{itemize}

The generation loop terminates when (1) branch coverage reaches
$\tau_{\text{cov}}$ (primary criterion), or (2) the per-iteration
branch coverage gain stays below a positive threshold $\varepsilon$
for $P$ consecutive iterations (no-progress early-stop). The loop
is bounded because (i) progress iterations (gain $\geq \varepsilon$)
number at most $\lfloor 1/\varepsilon\rfloor$, since branch coverage
is bounded above by $1.0$ and each contributes at least $\varepsilon$;
and (ii) no-progress iterations (gain $< \varepsilon$) cannot
accumulate indefinitely because the $P$-th consecutive one triggers
the early-stop.

The generation prompt (provided in Appendix~\ref{app:prompts})
includes:
\begin{itemize}[leftmargin=*, itemsep=2pt]
    \item Entity signature and type information
    \item Reference source code for understanding expected behavior
    \item Dependency signatures to ensure consistency with callees
    \item Coverage feedback as $(\text{line number}, \text{source snippet})$
    pairs extracted from \texttt{coverage.py}'s branch-coverage output,
    indicating which lines remain uncovered
\end{itemize}

\subsection{Quality Considerations}

While the validation step ensures test correctness, LLM-generated
tests may have quality limitations that motivate our diversified
error batch sampling (\S\ref{subsec:expansion}).

\paragraph{Error Message Quality Variation.}
Since the LLM writes complete test functions, the diagnostic value
of error messages depends entirely on how each assertion is written.
Some tests include informative messages (e.g.,
\texttt{assert result == [1,2,3], f"Expected [1,2,3] got \{result\}"})
that directly reveal the failure mode, while others use bare assertions
(e.g., \texttt{assert result == expected}) that produce only
\texttt{AssertionError} with no actionable signal. This heterogeneity
is inherent to LLM-generated tests and cannot be controlled a priori.

\paragraph{Input Quality Variation.}
Not all generated test inputs are equally informative. Some inputs may
exercise redundant code paths, miss critical edge cases, or cluster
around similar patterns. By conditioning refinements on different test
subsets, diversified error batch sampling increases the probability of
encountering informative failure signals.

\paragraph{External Dependencies.}
External dependencies (filesystem, network, time, randomness) are
detected via static analysis of import statements and AST patterns;
the LLM is then prompted to isolate them through \texttt{pytest}
fixtures (mocks for filesystem and network calls, frozen-time
wrappers for time-dependent code) so that test outcomes remain
deterministic and reproducible across runs.

\subsection{Hyperparameters}

We use the following hyperparameters for test generation:
\begin{itemize}[leftmargin=*, itemsep=2pt]
    \item \textbf{LLM}: GPT-4o (fixed across all experiments for
    consistency)
    \item \textbf{Temperature}: $\tau = 0.7$ for input diversity
    \item \textbf{Coverage threshold}: $\tau_{\text{cov}} = 90\%$
    branch coverage
    \item \textbf{No-progress patience}: $P = 2$ consecutive
    iterations within $\varepsilon = 0.01$ coverage gain
\end{itemize}

\section{Additional Related Work}
\label{app:additional_related}

\paragraph{Behavioral specifications as vertical knowledge.}
Behavioral specifications can be viewed as a form of vertical knowledge: repository-specific facts about what each program entity does, which are absent from a model's weights and must be supplied to it at inference time.
A broad line of work studies how such domain knowledge is acquired, structured, and used by LLMs.
In cyber threat intelligence (CTI), LLMs have been used to construct knowledge graphs from unstructured threat reports~\citep{cheng2025ctinexus}, retrieval-augmented LLMs have been benchmarked over heterogeneous CTI sources~\citep{cheng2026cticonnect}, and CTI corpora have been organized into agent-native scaffolds for direct consumption by agents~\citep{cheng2026ctifoundry}.
Related efforts parse hybrid system logs into structured templates~\citep{xu2023hue} and translate natural language into first-order logic that symbolic reasoners can consume~\citep{putra2026nl2logic}, and documentation optimized as repository-specific playbooks for coding agents~\citep{cheng2026docsearch} brings this view to software.
These works share the premise that knowledge should be represented in the form its consumer can use.
They differ from \textsc{\name} in how the representation is judged and improved: knowledge graphs, retrieval corpora, and parsers are typically constructed once and evaluated by extraction or answer accuracy, whereas \textsc{\name} treats the knowledge artifact itself as the object of recursive improvement, judges it by the behavior it induces in the consuming agent, and revises it until that behavior is verified correct.

\section{Compute Resources Used}
All evaluations involving proprietary models (e.g., GPT-4o, Gemini-2.5-Flash, Gemini-3.1-Pro, Claude-4.5-Sonnet),  were conducted through the APIs. These calls were issued from standard desktop machines using CPU-only execution, with no reliance on local GPU hardware, which makes the approach easy to reproduce in widely available computing environments.

For open-source models (e.g., Qwen), inference was run on a single node equipped with eight NVIDIA H100 GPUs (80\,GB each). Experiments were executed sequentially on this node to maintain consistent and reproducible runtime conditions across different model families.

Overall, this setup highlights the modest computational demands of our method and shows that our approach remains practical and scalable even in resource-constrained settings when hosted APIs are used.

\section{Use of LLMs}
In this work, large language models (LLMs) were primarily used for inference in the specification generation experiments. Specifically, they were employed to generate responses under different prompting and evaluation settings, which enabled us to assess the effectiveness of our method across a range of generation tasks and model families.

In addition to their role in experimental inference, LLMs were also used as a writing assistant during manuscript preparation. In particular, they helped improve the clarity, fluency, and overall readability of the paper by polishing sentence-level expressions and refining the presentation of technical descriptions. All substantive technical content, experimental design, analysis, and conclusions were determined and verified by the authors.

\section{Limitations}
\label{sec:limitations}

The recursion in \textsc{\name} is over specifications, not over the improver: the search procedure itself is fixed, and the loop requires a reference implementation against which self-generated tests are validated. Our evaluation of the loop covers 20 Python modules; other languages appear only as a downstream target in the cross-language transfer experiment (\S\ref{subsubsec:cross_lang}).

\section{Broader Impact}
\label{sec:impact}

This work studies how a coding agent can improve the behavioral specifications in its harness: natural-language descriptions of program behavior that are written for code-generation models and verified by execution, in contrast to human-facing documentation. Its main potential benefit is to improve the reliability, controllability, and practical utility of LLM-based coding agents in tasks such as code synthesis, migration, translation, and feature development, thereby reducing iteration cost and improving software quality. At the same time, more effective autonomous coding systems may lower the barrier to producing software at scale, including in harmful settings. The self-improvement studied here is bounded by design: it changes neither model weights nor the search procedure, and every candidate revision is evaluated by execution against tests validated on the reference implementation. Our method does not remove the need for verification or human oversight; it operates entirely through test-based feedback and does not introduce capabilities beyond improving the behavioral precision of specifications for downstream code generation. We therefore view its societal implications as broadly aligned with those of existing LLM-based software engineering systems, with the main positive contribution being better-structured, more evaluable interaction between codebases and generative models.

\newpage
\section*{NeurIPS Paper Checklist}

\begin{enumerate}

\item {\bf Claims}
    \item[] Question: Do the main claims made in the abstract and introduction accurately reflect the paper's contributions and scope?
    \item[] Answer: \answerYes{}
    \item[] Justification: The main claims made in the abstract and introduction accurately reflect the paper's contributions and scope.
    \item[] Guidelines:
    \begin{itemize}
        \item The answer \answerNA{} means that the abstract and introduction do not include the claims made in the paper.
        \item The abstract and/or introduction should clearly state the claims made, including the contributions made in the paper and important assumptions and limitations. A \answerNo{} or \answerNA{} answer to this question will not be perceived well by the reviewers.
        \item The claims made should match theoretical and experimental results, and reflect how much the results can be expected to generalize to other settings.
        \item It is fine to include aspirational goals as motivation as long as it is clear that these goals are not attained by the paper.
    \end{itemize}

\item {\bf Limitations}
    \item[] Question: Does the paper discuss the limitations of the work performed by the authors?
    \item[] Answer: \answerYes{}
    \item[] Justification: We discuss the limitations of the work in the Limitations section (Appendix~\ref{sec:limitations}) and the sensitivity analysis.
    \item[] Guidelines:
    \begin{itemize}
        \item The answer \answerNA{} means that the paper has no limitation while the answer \answerNo{} means that the paper has limitations, but those are not discussed in the paper.
        \item The authors are encouraged to create a separate ``Limitations'' section in their paper.
        \item The paper should point out any strong assumptions and how robust the results are to violations of these assumptions (e.g., independence assumptions, noiseless settings, model well-specification, asymptotic approximations only holding locally). The authors should reflect on how these assumptions might be violated in practice and what the implications would be.
        \item The authors should reflect on the scope of the claims made, e.g., if the approach was only tested on a few datasets or with a few runs. In general, empirical results often depend on implicit assumptions, which should be articulated.
        \item The authors should reflect on the factors that influence the performance of the approach. For example, a facial recognition algorithm may perform poorly when image resolution is low or images are taken in low lighting. Or a speech-to-text system might not be used reliably to provide closed captions for online lectures because it fails to handle technical jargon.
        \item The authors should discuss the computational efficiency of the proposed algorithms and how they scale with dataset size.
        \item If applicable, the authors should discuss possible limitations of their approach to address problems of privacy and fairness.
        \item While the authors might fear that complete honesty about limitations might be used by reviewers as grounds for rejection, a worse outcome might be that reviewers discover limitations that aren't acknowledged in the paper. The authors should use their best judgment and recognize that individual actions in favor of transparency play an important role in developing norms that preserve the integrity of the community. Reviewers will be specifically instructed to not penalize honesty concerning limitations.
    \end{itemize}

\item {\bf Theory assumptions and proofs}
    \item[] Question: For each theoretical result, does the paper provide the full set of assumptions and a complete (and correct) proof?
    \item[] Answer: \answerYes{}
    \item[] Justification: All theoretical results state assumptions explicitly and provide complete proofs in the appendix.
    \item[] Guidelines:
    \begin{itemize}
        \item The answer \answerNA{} means that the paper does not include theoretical results.
        \item All the theorems, formulas, and proofs in the paper should be numbered and cross-referenced.
        \item All assumptions should be clearly stated or referenced in the statement of any theorems.
        \item The proofs can either appear in the main paper or the supplemental material, but if they appear in the supplemental material, the authors are encouraged to provide a short proof sketch to provide intuition.
        \item Inversely, any informal proof provided in the core of the paper should be complemented by formal proofs provided in appendix or supplemental material.
        \item Theorems and Lemmas that the proof relies upon should be properly referenced.
    \end{itemize}

    \item {\bf Experimental result reproducibility}
    \item[] Question: Does the paper fully disclose all the information needed to reproduce the main experimental results of the paper to the extent that it affects the main claims and/or conclusions of the paper (regardless of whether the code and data are provided or not)?
    \item[] Answer: \answerYes{}
    \item[] Justification: We provide detailed experimental information in the appendix and release source code that can reproduce the reported results.
    \item[] Guidelines:
    \begin{itemize}
        \item The answer \answerNA{} means that the paper does not include experiments.
        \item If the paper includes experiments, a \answerNo{} answer to this question will not be perceived well by the reviewers: Making the paper reproducible is important, regardless of whether the code and data are provided or not.
        \item If the contribution is a dataset and\slash or model, the authors should describe the steps taken to make their results reproducible or verifiable.
        \item Depending on the contribution, reproducibility can be accomplished in various ways. For example, if the contribution is a novel architecture, describing the architecture fully might suffice, or if the contribution is a specific model and empirical evaluation, it may be necessary to either make it possible for others to replicate the model with the same dataset, or provide access to the model. In general. releasing code and data is often one good way to accomplish this, but reproducibility can also be provided via detailed instructions for how to replicate the results, access to a hosted model (e.g., in the case of a large language model), releasing of a model checkpoint, or other means that are appropriate to the research performed.
        \item While NeurIPS does not require releasing code, the conference does require all submissions to provide some reasonable avenue for reproducibility, which may depend on the nature of the contribution. For example
        \begin{enumerate}
            \item If the contribution is primarily a new algorithm, the paper should make it clear how to reproduce that algorithm.
            \item If the contribution is primarily a new model architecture, the paper should describe the architecture clearly and fully.
            \item If the contribution is a new model (e.g., a large language model), then there should either be a way to access this model for reproducing the results or a way to reproduce the model (e.g., with an open-source dataset or instructions for how to construct the dataset).
            \item We recognize that reproducibility may be tricky in some cases, in which case authors are welcome to describe the particular way they provide for reproducibility. In the case of closed-source models, it may be that access to the model is limited in some way (e.g., to registered users), but it should be possible for other researchers to have some path to reproducing or verifying the results.
        \end{enumerate}
    \end{itemize}

\item {\bf Open access to data and code}
    \item[] Question: Does the paper provide open access to the data and code, with sufficient instructions to faithfully reproduce the main experimental results, as described in supplemental material?
    \item[] Answer: \answerYes{}
    \item[] Justification: We release code and data at \url{https://github.com/ccsnow127/SpecForge}.
    \item[] Guidelines:
    \begin{itemize}
        \item The answer \answerNA{} means that paper does not include experiments requiring code.
        \item Please see the NeurIPS code and data submission guidelines (\url{https://neurips.cc/public/guides/CodeSubmissionPolicy}) for more details.
        \item While we encourage the release of code and data, we understand that this might not be possible, so \answerNo{} is an acceptable answer. Papers cannot be rejected simply for not including code, unless this is central to the contribution (e.g., for a new open-source benchmark).
        \item The instructions should contain the exact command and environment needed to run to reproduce the results. See the NeurIPS code and data submission guidelines (\url{https://neurips.cc/public/guides/CodeSubmissionPolicy}) for more details.
        \item The authors should provide instructions on data access and preparation, including how to access the raw data, preprocessed data, intermediate data, and generated data, etc.
        \item The authors should provide scripts to reproduce all experimental results for the new proposed method and baselines. If only a subset of experiments are reproducible, they should state which ones are omitted from the script and why.
        \item At submission time, to preserve anonymity, the authors should release anonymized versions (if applicable).
        \item Providing as much information as possible in supplemental material (appended to the paper) is recommended, but including URLs to data and code is permitted.
    \end{itemize}

\item {\bf Experimental setting/details}
    \item[] Question: Does the paper specify all the training and test details (e.g., data splits, hyperparameters, how they were chosen, type of optimizer) necessary to understand the results?
    \item[] Answer: \answerYes{}
    \item[] Justification: We specify all training and test details in the experiments section and the appendix.
    \item[] Guidelines:
    \begin{itemize}
        \item The answer \answerNA{} means that the paper does not include experiments.
        \item The experimental setting should be presented in the core of the paper to a level of detail that is necessary to appreciate the results and make sense of them.
        \item The full details can be provided either with the code, in appendix, or as supplemental material.
    \end{itemize}

\item {\bf Experiment statistical significance}
    \item[] Question: Does the paper report error bars suitably and correctly defined or other appropriate information about the statistical significance of the experiments?
    \item[] Answer: \answerYes{}
    \item[] Justification: We repeated experiments across multiple random seeds and report mean and standard deviation in the appendix.
    \item[] Guidelines:
    \begin{itemize}
        \item The answer \answerNA{} means that the paper does not include experiments.
        \item The authors should answer \answerYes{} if the results are accompanied by error bars, confidence intervals, or statistical significance tests, at least for the experiments that support the main claims of the paper.
        \item The factors of variability that the error bars are capturing should be clearly stated (for example, train/test split, initialization, random drawing of some parameter, or overall run with given experimental conditions).
        \item The method for calculating the error bars should be explained (closed form formula, call to a library function, bootstrap, etc.)
        \item The assumptions made should be given (e.g., Normally distributed errors).
        \item It should be clear whether the error bar is the standard deviation or the standard error of the mean.
        \item It is OK to report 1-sigma error bars, but one should state it. The authors should preferably report a 2-sigma error bar than state that they have a 96\% CI, if the hypothesis of Normality of errors is not verified.
        \item For asymmetric distributions, the authors should be careful not to show in tables or figures symmetric error bars that would yield results that are out of range (e.g., negative error rates).
        \item If error bars are reported in tables or plots, the authors should explain in the text how they were calculated and reference the corresponding figures or tables in the text.
    \end{itemize}

\item {\bf Experiments compute resources}
    \item[] Question: For each experiment, does the paper provide sufficient information on the computer resources (type of compute workers, memory, time of execution) needed to reproduce the experiments?
    \item[] Answer: \answerYes{}
    \item[] Justification: We provide sufficient information on computational resources in the appendix.
    \item[] Guidelines:
    \begin{itemize}
        \item The answer \answerNA{} means that the paper does not include experiments.
        \item The paper should indicate the type of compute workers CPU or GPU, internal cluster, or cloud provider, including relevant memory and storage.
        \item The paper should provide the amount of compute required for each of the individual experimental runs as well as estimate the total compute.
        \item The paper should disclose whether the full research project required more compute than the experiments reported in the paper (e.g., preliminary or failed experiments that didn't make it into the paper).
    \end{itemize}

\item {\bf Code of ethics}
    \item[] Question: Does the research conducted in the paper conform, in every respect, with the NeurIPS Code of Ethics \url{https://neurips.cc/public/EthicsGuidelines}?
    \item[] Answer: \answerYes{}
    \item[] Justification: The research conducted in the paper conforms, in every respect, with the NeurIPS Code of Ethics.
    \item[] Guidelines:
    \begin{itemize}
        \item The answer \answerNA{} means that the authors have not reviewed the NeurIPS Code of Ethics.
        \item If the authors answer \answerNo, they should explain the special circumstances that require a deviation from the Code of Ethics.
        \item The authors should make sure to preserve anonymity (e.g., if there is a special consideration due to laws or regulations in their jurisdiction).
    \end{itemize}

\item {\bf Broader impacts}
    \item[] Question: Does the paper discuss both potential positive societal impacts and negative societal impacts of the work performed?
    \item[] Answer: \answerYes{}
    \item[] Justification: We discuss potential positive and negative societal impacts in the Broader Impact section.
    \item[] Guidelines:
    \begin{itemize}
        \item The answer \answerNA{} means that there is no societal impact of the work performed.
        \item If the authors answer \answerNA{} or \answerNo, they should explain why their work has no societal impact or why the paper does not address societal impact.
        \item Examples of negative societal impacts include potential malicious or unintended uses (e.g., disinformation, generating fake profiles, surveillance), fairness considerations (e.g., deployment of technologies that could make decisions that unfairly impact specific groups), privacy considerations, and security considerations.
        \item The conference expects that many papers will be foundational research and not tied to particular applications, let alone deployments. However, if there is a direct path to any negative applications, the authors should point it out. For example, it is legitimate to point out that an improvement in the quality of generative models could be used to generate Deepfakes for disinformation. On the other hand, it is not needed to point out that a generic algorithm for optimizing neural networks could enable people to train models that generate Deepfakes faster.
        \item The authors should consider possible harms that could arise when the technology is being used as intended and functioning correctly, harms that could arise when the technology is being used as intended but gives incorrect results, and harms following from (intentional or unintentional) misuse of the technology.
        \item If there are negative societal impacts, the authors could also discuss possible mitigation strategies (e.g., gated release of models, providing defenses in addition to attacks, mechanisms for monitoring misuse, mechanisms to monitor how a system learns from feedback over time, improving the efficiency and accessibility of ML).
    \end{itemize}

\item {\bf Safeguards}
    \item[] Question: Does the paper describe safeguards that have been put in place for responsible release of data or models that have a high risk for misuse (e.g., pre-trained language models, image generators, or scraped datasets)?
    \item[] Answer: \answerNA{}
    \item[] Justification: The paper poses no such risks.
    \item[] Guidelines:
    \begin{itemize}
        \item The answer \answerNA{} means that the paper poses no such risks.
        \item Released models that have a high risk for misuse or dual-use should be released with necessary safeguards to allow for controlled use of the model, for example by requiring that users adhere to usage guidelines or restrictions to access the model or implementing safety filters.
        \item Datasets that have been scraped from the Internet could pose safety risks. The authors should describe how they avoided releasing unsafe images.
        \item We recognize that providing effective safeguards is challenging, and many papers do not require this, but we encourage authors to take this into account and make a best faith effort.
    \end{itemize}

\item {\bf Licenses for existing assets}
    \item[] Question: Are the creators or original owners of assets (e.g., code, data, models), used in the paper, properly credited and are the license and terms of use explicitly mentioned and properly respected?
    \item[] Answer: \answerYes{}
    \item[] Justification: We properly credit the creators of all assets used in the paper and respect the licenses and terms of use.
    \item[] Guidelines:
    \begin{itemize}
        \item The answer \answerNA{} means that the paper does not use existing assets.
        \item The authors should cite the original paper that produced the code package or dataset.
        \item The authors should state which version of the asset is used and, if possible, include a URL.
        \item The name of the license (e.g., CC-BY 4.0) should be included for each asset.
        \item For scraped data from a particular source (e.g., website), the copyright and terms of service of that source should be provided.
        \item If assets are released, the license, copyright information, and terms of use in the package should be provided. For popular datasets, \url{paperswithcode.com/datasets} has curated licenses for some datasets. Their licensing guide can help determine the license of a dataset.
        \item For existing datasets that are re-packaged, both the original license and the license of the derived asset (if it has changed) should be provided.
        \item If this information is not available online, the authors are encouraged to reach out to the asset's creators.
    \end{itemize}

\item {\bf New assets}
    \item[] Question: Are new assets introduced in the paper well documented and is the documentation provided alongside the assets?
    \item[] Answer: \answerYes{}
    \item[] Justification: We document the new benchmark in the appendix and release it under permissive licenses.
    \item[] Guidelines:
    \begin{itemize}
        \item The answer \answerNA{} means that the paper does not release new assets.
        \item Researchers should communicate the details of the dataset\slash code\slash model as part of their submissions via structured templates. This includes details about training, license, limitations, etc.
        \item The paper should discuss whether and how consent was obtained from people whose asset is used.
        \item At submission time, remember to anonymize your assets (if applicable). You can either create an anonymized URL or include an anonymized zip file.
    \end{itemize}

\item {\bf Crowdsourcing and research with human subjects}
    \item[] Question: For crowdsourcing experiments and research with human subjects, does the paper include the full text of instructions given to participants and screenshots, if applicable, as well as details about compensation (if any)?
    \item[] Answer: \answerNA{}
    \item[] Justification: Our paper does not involve crowdsourcing or research with human subjects.
    \item[] Guidelines:
    \begin{itemize}
        \item The answer \answerNA{} means that the paper does not involve crowdsourcing nor research with human subjects.
        \item Including this information in the supplemental material is fine, but if the main contribution of the paper involves human subjects, then as much detail as possible should be included in the main paper.
        \item According to the NeurIPS Code of Ethics, workers involved in data collection, curation, or other labor should be paid at least the minimum wage in the country of the data collector.
    \end{itemize}

\item {\bf Institutional review board (IRB) approvals or equivalent for research with human subjects}
    \item[] Question: Does the paper describe potential risks incurred by study participants, whether such risks were disclosed to the subjects, and whether Institutional Review Board (IRB) approvals (or an equivalent approval/review based on the requirements of your country or institution) were obtained?
    \item[] Answer: \answerNA{}
    \item[] Justification: Our paper does not involve crowdsourcing or research with human subjects.
    \item[] Guidelines:
    \begin{itemize}
        \item The answer \answerNA{} means that the paper does not involve crowdsourcing nor research with human subjects.
        \item Depending on the country in which research is conducted, IRB approval (or equivalent) may be required for any human subjects research. If you obtained IRB approval, you should clearly state this in the paper.
        \item We recognize that the procedures for this may vary significantly between institutions and locations, and we expect authors to adhere to the NeurIPS Code of Ethics and the guidelines for their institution.
        \item For initial submissions, do not include any information that would break anonymity (if applicable), such as the institution conducting the review.
    \end{itemize}

\item {\bf Declaration of LLM usage}
    \item[] Question: Does the paper describe the usage of LLMs if it is an important, original, or non-standard component of the core methods in this research? Note that if the LLM is used only for writing, editing, or formatting purposes and does \emph{not} impact the core methodology, scientific rigor, or originality of the research, declaration is not required.
    \item[] Answer: \answerYes{}
    \item[] Justification: We describe the LLM usage in the experiments section and the appendix.
    \item[] Guidelines:
    \begin{itemize}
        \item The answer \answerNA{} means that the core method development in this research does not involve LLMs as any important, original, or non-standard components.
        \item Please refer to our LLM policy in the NeurIPS handbook for what should or should not be described.
    \end{itemize}

\end{enumerate}

\end{document}